\theoremstyle{plain}
\newtheorem{theorem}{Theorem}[section]
\newtheorem{example}[theorem]{Example}
\newtheorem{lemma}[theorem]{Lemma}
\newtheorem{corollary}[theorem]{Corollary}
\theoremstyle{definition}
\newtheorem{definition}[theorem]{Definition}
\theoremstyle{remark}
\newcommand{\mat}[1]{\boldsymbol{\mathbf{#1}}}
\newcommand{\ind}{\mathbbm{1}}
\newcommand{\minmax}{\mathop{\text{min}/\text{max}}}
\DeclareMathOperator{\vect}{vec}
\definecolor{codeblue}{rgb}{0.13, 0.13, 1.0}
\definecolor{codegreen}{rgb}{0.0, 0.5, 0.0}
\definecolor{codegray}{rgb}{0.5, 0.5, 0.5}
\definecolor{backcolour}{rgb}{0.95, 0.95, 0.92}
\lstdefinestyle{pythonstyle}{
    backgroundcolor=\color{backcolour},
    commentstyle=\color{codegreen},
    keywordstyle=\color{codeblue}\bfseries,
    numberstyle=\tiny\color{codegray},
    stringstyle=\color{codegreen},
    basicstyle=\ttfamily\footnotesize,
    breakatwhitespace=false,
    breaklines=true,
    captionpos=b,
    keepspaces=true,
    numbers=left,
    numbersep=5pt,
    showspaces=false,
    showstringspaces=false,
    showtabs=false,
    tabsize=4,
    frame=tb,
    language=Python
}
\title{Formal Verification of Markov Processes with Learned Parameters}
\author{%
  Muhammad Maaz \\
  University of Toronto \\
  \texttt{m.maaz@mail.utoronto.ca}
  \And
  Timothy C. Y. Chan \\
  University of Toronto \\
  \texttt{tcychan@mie.utoronto.ca}
}
\begin{document}

\maketitle

\begin{abstract}
  We introduce the problem of formally verifying properties of Markov processes where the parameters are given by the output of machine learning models. For a broad class of machine learning models, including linear models, tree-based models, and neural networks, verifying properties of Markov chains like reachability, hitting time, and total reward can be formulated as a bilinear program. We develop a decomposition and bound propagation scheme for solving the bilinear program and show through computational experiments that our method solves the problem to global optimality up to 100x faster than state-of-the-art solvers. To demonstrate the practical utility of our approach, we apply it to a real-world healthcare case study.
  Along with the paper, we release \texttt{markovml}, an open-source tool for building Markov processes, integrating pretrained machine learning models, and verifying their properties, available at \href{https://github.com/mmaaz-git/markovml}{https://github.com/mmaaz-git/markovml}.
\end{abstract}

\section{Introduction}
\label{sec:intro}

Markov processes are fundamental mathematical models used across computer science, operations research, engineering, and healthcare \cite{stewart2021introduction}. In computer science, they capture the behavior of probabilistic systems such as hardware, communication protocols, or autonomous agents \cite{baier2008principles}. In engineering, they are used to analyze degradation and failure in complex machinery \cite{rausand2003system}. In healthcare, they model patient transitions between clinical states and underpin cost-effectiveness analyses of medical interventions \cite{sonnenberg1993markov}.

As machine learning (ML) becomes increasingly integrated into real-world systems, Markov models are evolving to incorporate heterogeneous, data-driven parameters. For instance, ML models can estimate failure rates from sensor data or patient-specific transition probabilities based on clinical features \cite{mertens2022microsimulation}. This gives rise to a new class of Markov processes where parameters are not fixed but are instead \textit{learned functions}.

Despite this growing trend, there is limited work on rigorously analyzing such systems. In healthcare, most studies rely on Monte Carlo simulation \cite{krijkamp2018microsimulation}, which supports subgroup analyses (e.g., patients over 60) but cannot provide formal guarantees -- an important limitation in high-stakes domains like medicine or safety-critical infrastructure.

In this paper, we present a new framework for the formal verification of Markov processes with ML-based parameters. Our key insight is that properties of such systems can be encoded as \emph{bilinear programs}, allowing us to obtain exact results with formal guarantees. While our primary motivation is healthcare, the approach generalizes to any application where learned models feed into Markov processes. This enables us to rigorously answer questions such as: Given a bound on the input, what is the worst-case probability of reaching a failure state? Is the failure rate for machines with certain properties guaranteed to remain below 0.01\%? For a clinical subgroup, is the expected treatment cost within a government threshold?

To summarize, our main contributions are:
\begin{enumerate}[noitemsep,topsep=0pt]
\item We introduce a general framework for formally verifying properties of Markov processes whose parameters are given by ML models.
\item We show that for a broad class of models, including linear models, tree ensembles, and ReLU-based neural networks, verification problems can be expressed as (mixed-integer) bilinear programs.
\item We develop a novel decomposition and bound propagation method that significantly accelerates global solution times, outperforming existing solvers by orders of magnitude.
\end{enumerate}

\section{Related Work}
\label{sec:related}

\paragraph{Formal Verification of Markov Processes}
Probabilistic model checking verifies properties of systems modeled by Markov chains \cite{hansson1994logic, baier2008principles}, using tools like PRISM and others \cite{kwiatkowska2002prism, katoen2005markov, hermanns2000markov, sen2005statistical, younes2005ymer, lassaigne2002approximate}. More recent techniques allow parameters to be intervals \cite{sen2006model, delahaye2015parameter, petrucci2018parameter}, rational functions \cite{chen2013model, junges2020parameter, junges2024parameter}, or distributions \cite{badings2023efficient}. We extend this further by allowing parameters defined by ML models, and release our own tool, \texttt{markovml}.

\paragraph{Markov Processes with Uncertain Parameters}
Markov processes with parameters in uncertainty sets or defined by differentiable functions have been well-studied \cite{blanc2008markov, de2014sensitivity, marbach2001simulation, marbach2003approximate}, as have Markov \textit{decision} processes with uncertain parameters \cite{el2005robust,iyengar2005robust,goyal2023robust,grand2024convex}. \citet{goh2018data} develop a value iteration algorithm for uncertain transitions, and \citet{chan2024exact} study the inverse problem of characterizing parameters satisfying a reward threshold. In contrast, we study uncertainty arising from ML-predicted parameters.

\paragraph{Formal Verification of ML Models}
Formal verification of ML models aims to prove properties such as robustness or output bounds; a well-known benchmark is ACAS Xu \cite{owen2019acas}. ReLU neural networks can be encoded as satisfiability modulo theories (SMT) or mixed-integer linear programming (MILP) problems, enabling verification via tools like Reluplex \cite{katz2017reluplex}, a modified simplex method that spurred substantial follow-up work \cite{tjeng2017evaluating, cheng2017maximum, anderson2020strong, tjandraatmadja2020convex, kronqvist2021between, anh2022neural}. State-of-the-art methods such as $\alpha, \beta$-CROWN combine bound propagation, branch-and-bound, and GPU acceleration \cite{wang2021beta,zhang22babattack,zhang2022general,kotha2023provably}. We leverage these formulations to embed ML models directly into our optimization framework.

\paragraph{Subgroup Analysis in Healthcare}
Our work is most directly motivated by subgroup analysis in healthcare cost-benefit evaluations using Markov microsimulation models, which capture heterogeneous patient trajectories over time \cite{krijkamp2018microsimulation}. These models compute metrics like the incremental cost-effectiveness ratio (ICER) and net monetary benefit (NMB) \cite{sonnenberg1993markov}, using individual-level transition probabilities derived from so-called risk scores -- typically logistic regression, although other models are sometimes used \cite{mertens2022microsimulation, wilde2019cost, lee2020health, breeze2017cost}. While existing approaches rely on Monte Carlo simulation, we instead provide an exact, non-simulation-based framework that supports a broad class of machine learning models, aligning with the shift toward more sophisticated analytics in healthcare.

\paragraph{Bilinear Programming}
Bilinear programs are NP-hard non-convex quadratic problems in which the objective or constraints contain bilinear terms \cite{horst2013handbook}. Global optima can be computed via branch-and-bound \cite{horst2013handbook}, aided by convex relaxations like McCormick’s envelope \cite{mccormick1976computability}, and are supported by modern solvers such as Gurobi \cite{gurobi}. While tighter relaxations or MILP reformulations exist under certain structural conditions \cite{dey2019new, horst2013handbook}, our problem does not satisfy these, necessitating a new approach. Moreover, we find that Gurobi's default approach performs poorly on our class of problems. In contrast, our method accelerates solution time by orders of magnitude while still leveraging Gurobi’s global optimality guarantees.

\section{Problem Formulation}
\label{sec:formulation}

\paragraph{Notation}
\label{subsec:notation}

Vectors are lowercase bold, e.g., $\mat{x}$, with $i$-th entry $x_i$, and matrices by uppercase bold letters, e.g., $\mat{M}$ with $(i,j)$-th entry $M_{ij}$. The identity matrix is denoted by $\mat{I}$, the vector of all ones by $\mat{1}$, with dimensions inferred from context. The set of integers from $1$ to $n$ is denoted by $[n]$.

\paragraph{Preliminaries}
\label{subsec:preliminaries}

A (discrete-time, finite-state) \emph{Markov chain} with $n$ states, indexed by $[n]$, is defined by a row-stochastic transition matrix $\mat{P} \in \mathbb{R}^{n \times n}$, where $P_{ij}$ is the probability of transitioning from state $i$ to state $j$, and a stochastic initial distribution vector $\mat{\pi} \in \mathbb{R}^n$, where $\pi_i$ is the probability of starting in state $i$. Furthermore, if we assign rewards to each state, we call this a \emph{Markov reward process}. A Markov reward process has a reward vector $\mat{r} \in \mathbb{R}^n$, where $r_i$ is the reward for being in state $i$ for one period. A state is  \emph{absorbing} if it cannot transition to any other state, and \emph{transient} otherwise.

Below, we recount three of the key properties commonly computed in practice \cite{puterman2014markov}.

\begin{definition}[Reachability]
    \label{def:reachability}
    The probability of eventually reaching a set of states $S \subseteq [n]$, from a set $T \subseteq [n]$ of transient states equal to the complement of $S$, assuming that the chain will reach $S$ from $T$ with probability 1, is given by $\mat{\tilde{\pi}}^\top (\mat{I} - \mat{Q})^{-1} \mat{R} \mat{1}$, where $\mat{Q}$ is the transition matrix restricted to $T$, $\mat{R}$ is the transition matrix from $T$ to $S$, and $\mat{\tilde{\pi}}$ is the initial distribution over $T$.
\end{definition}

\begin{definition}[Expected hitting time]
    \label{def:hitting_time}
    The expected number of steps to eventually reach a set of states $S \subseteq [n]$, from a set $T \subseteq [n]$ of transient states from a set $T \subseteq [n]$ of transient states equal to the complement of $S$, assuming that the chain will reach $S$ from $T$ with probability 1, is given by $\mat{\tilde{\pi}}^\top (\mat{I} - \mat{Q})^{-1} \mat{1}$, where $\mat{Q}$ is the transition matrix restricted to $T$, and $\mat{\tilde{\pi}}$ is the initial distribution over $T$.
\end{definition}

\begin{definition}[Total infinite-horizon discounted reward]
    \label{def:discounted_reward}
    The total infinite-horizon discounted reward, with a discount factor $\lambda \in (0, 1)$ is given by $\sum_{t=0}^\infty \lambda^t \mat{\pi}^\top \mat{P}^t \mat{r} = \mat{\pi}^\top \left( \mat{I} - \lambda \mat{P} \right)^{-1} \mat{r}$.
\end{definition}

For each of the above quantities, we can restrict the analysis to a single state, e.g., for reachability starting from a state $i$, we simply set $\tilde{\pi}_i=1$ and for $j \neq i$, set $\tilde{\pi}_j=0$.

These three quantities enable rich analysis of Markov processes and the systems they model. Reachability can be used to verify the probability of failure in a system. Expected hitting time can be used to compute the expected time to failure, or life expectancy of a person. Total reward can be used to compute the resource consumption of a system or total cost of a drug.

We wish to \textit{verify} these properties, namely finding their maximum or minimum. If the Markov process' parameters are fixed, as is common in model checking, these quantities can be computed exactly by solving a linear system. However, in our case, the parameters are given by ML models, and so we will formulate an optimization problem to derive bounds on these quantities. As the three quantities have similar formulations, we will proceed in the rest of the paper by studying the total reward. It will be easy to modify results for the other two quantities.

\subsection{Embedding ML Models}
We now consider $\mat{\pi}, \mat{P}, \mat{r}$ as functions of a \emph{feature vector} $\mat{x} \in \mathbb{R}^m$, where $m$ is the number of features. We encode the relationship from $\mat{x}$ to $\mat{\pi}, \mat{P}, \mat{r}$ via a set of functions, $f_1, f_2, \ldots, f_{k_f}$, where $k_f$ is the number of functions. Each function $f_i: \mathbb{R}^m \rightarrow \mathbb{R}^{\ell_i}$ for $i = 1, 2, \ldots, k_f$ takes a feature vector and outputs a vector $\mat{\theta_i}$ (e.g., a classifier may output class probabilities for three classes). We concatenate these vectors to form the output vector $\mat{\theta} \in \mathbb{R}^\ell$, where $\ell = \sum_{i=1}^{k_f} \ell_i$.

Our key assumption on the functions will be that they are \emph{mixed-integer linear representable} (MILP-representable), meaning that the relationship between the inputs and outputs can be expressed using linear constraints and binary variables. This is a broad class of functions that includes, e.g., piecewise linear functions (including simple ``if-then'' rules), linear and logistic regression, tree-based models, and neural networks with ReLU activations

It is important to note that in the case of logistic regression and neural network classifiers, there is a non-linearity introduced by the softmax or logistic function. Typically in the formal verification literature, it suffices to check the logits, not the actual probabilities. However, in our problem, we need the probabilities as they are inputs into our Markov process. Practically, we handle this issue in this paper and in our software using Gurobi's built-in \textit{spatial branch-and-bound techniques} for nonlinear functions -- essentially dynamic piecewise linear approximations.

Next, the output vector $\mat{\theta}$ is linked to $\mat{\pi}, \mat{P}$ and $\mat{r}$ through affine equalities: $\mat{\pi} = \mat{A_\pi} \mat{\theta} + \mat{b_\pi}$, $\vect(\mat{P}) = \mat{A_P} \mat{\theta} + \mat{b_P}$, and $\mat{r} = \mat{A_r} \mat{\theta} + \mat{b_r}$, for fixed $\mat{A_\pi} \in \mathbb{R}^{n \times \ell}, \mat{A_P} \in \mathbb{R}^{n^2 \times \ell}, \mat{A_r} \in \mathbb{R}^{n \times \ell}$ and $\mat{b_\pi} \in \mathbb{R}^n, \mat{b_P} \in \mathbb{R}^{n^2}, \mat{b_r} \in \mathbb{R}^{n}$. Note the sizes of the matrices mean that every element of $\mat{\pi}, \mat{P}, \mat{r}$ must be bound with an affine equality to $\mat{\theta}$. In other words, if we fix $\mat{\theta}$, we can compute $\mat{\pi}, \mat{P}, \mat{r}$ exactly. The $\vect$ operator vectorizes a matrix, i.e., by concatenating rows into one column vector, so that $\vect(\mat{P}) \in \mathbb{R}^{n^2}$.

The affine equalities allow for common formulations like one of the parameters equaling an element of $\mat{\theta}$, e.g., $\pi_1 = \theta_1$. Another common situation is that we have a function that outputs the probability of transitioning to a specific state (say, a death state in a healthcare model), and the remaining probability mass is, e.g., distributed uniformly over the remaining states: $\pi_n = \theta_1$ and $\pi_i = (1 - \sum_{j \neq i} \theta_j) / (n-1)$ for $i \neq n$. Of course, the affine equalities also allow for parameters to be fixed as constants.

Next, we may have linear inequalities on the parameters: $\mat{C_\pi} \mat{\pi} \leq \mat{d_\pi}$, $\mat{C_P} \vect(\mat{P}) \leq \mat{d_P}$, and $\mat{C_r} \mat{r} \leq \mat{d_r}$, for fixed $\mat{C_\pi} \in \mathbb{R}^{k_{\mat{\pi}} \times n}, \mat{C_P} \in \mathbb{R}^{k_{\mat{P}} \times n^2}, \mat{C_r} \in \mathbb{R}^{k_{\mat{r}} \times n}$ and $\mat{d_\pi} \in \mathbb{R}^{k_{\mat{\pi}}}, \mat{d_P} \in \mathbb{R}^{k_{\mat{P}}}, \mat{d_r} \in \mathbb{R}^{k_{\mat{r}}}$, where $k_{\mat{\pi}}, k_{\mat{P}}, k_{\mat{r}}$ are the number of linear inequalities on $\mat{\pi}$, $\mat{P}$, and $\mat{r}$, respectively. 

Linear inequalities are an appropriately general assumption as they arise naturally from fixing a monotonic ordering on the rewards or the probabilities. In reliability engineering and healthcare, a common assumption is that of \emph{increasing failure rate}, which implies a certain stochastic ordering on the transition probabilities \cite{barlow1996mathematical, alagoz2007determining}. Effectively, this introduces a series of linear inequalities on the transition probabilities. See Figure~\ref{fig:maindiagram} for a diagram of the full pipeline.

Lastly, we assume that the feature vector $\mat{x}$ lies in a set $\mathcal{X} \subseteq \mathbb{R}^m$. We assume that the set $\mathcal{X}$ is \emph{mixed-integer-linear representable} (MILP-representable), i.e., it can be written as a finite union of polyhedra \cite{jeroslow1984integer}. This is satisfied by a wide range of sets, including box constraints, polytopes, and discrete sets. In practice, this may look like, e.g., that we restrict $\mat{x}$ to represent the set of patients that are either under 18 or over 65, as this is a union of two intervals.

\begin{figure*}
    \begin{center}
    \includegraphics[width=\textwidth]{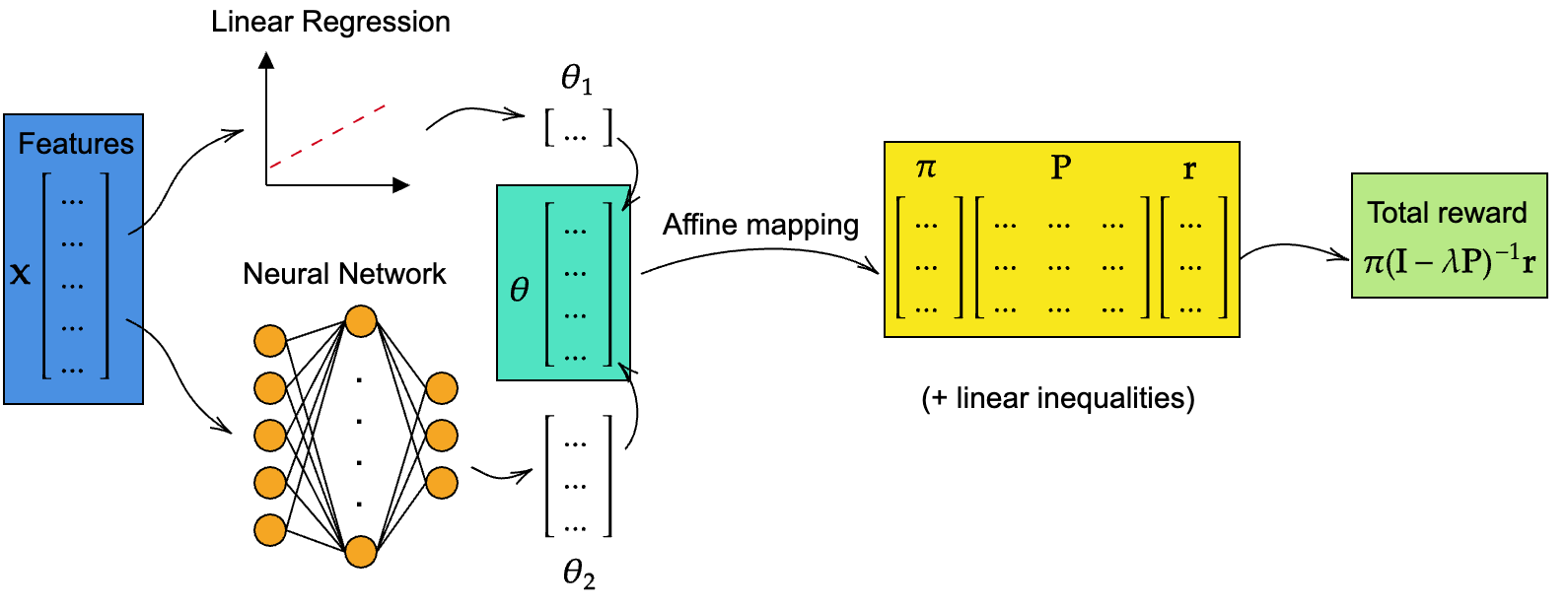}
    \caption{Example of our pipeline. A feature vector $\mat{x}$ is passed through different functions, here a linear regression and a neural network, to obtain the output vector $\mat{\theta}$, which then determines the parameters of the Markov process through affine equalities.}
    \label{fig:maindiagram}
    \end{center}
    \vskip -0.2in
\end{figure*}

With these assumptions, we can now formulate the following optimization problem for the total infinite-horizon discounted reward:
\begin{align*}
    &\minmax \limits_{\mat{\pi}, \mat{P}, \mat{r}, \mat{v}, \mat{x}, \mat{\theta}} \mat{\pi}^\top \mat{v} \quad \text{s.t.} \\
    &\quad \mat{v} = \lambda \mat{P} \mat{v} + \mat{r}, \\
    &\quad \mat{\theta} = \begin{bmatrix} \mat{\theta}_1, \mat{\theta}_2, \cdots, \mat{\theta}_{k_f} \end{bmatrix}^\top, \\
    &\quad \mat{\theta}_i = f_i(\mat{x}) \quad \forall i \in [k_f], \\
    &\quad \mat{\pi} = \mat{A_\pi} \mat{\theta} + \mat{b_\pi}, \\
    &\quad \vect(\mat{P}) = \mat{A_P} \mat{\theta} + \mat{b_P}, \\
    &\quad \mat{r} = \mat{A_r} \mat{\theta} + \mat{b_r}, \\
    &\quad \mat{C_\pi} \mat{\pi} \leq \mat{d_\pi}, \\
    &\quad \mat{C_P} \vect(\mat{P}) \leq \mat{d_P}, \\
    &\quad \mat{C_r} \mat{r} \leq \mat{d_r}, \\
    &\quad \sum_{j=1}^n P_{ij} = 1 \quad \forall i \in [n], \\
    &\quad \sum_{i=1}^n \pi_i = 1, \\
    &\quad 0 \leq P_{ij} \leq 1 \quad \forall i,j \in [n] \times [n], \\
    &\quad 0 \leq \pi_i \leq 1 \quad \forall i \in [n], \\
    &\quad \mat{x} \in \mathcal{X}.
\end{align*}
Above, $\mat{v}$ is a new vector we have introduced, which in Markov process theory would be referred to as the \emph{value function}, and often arises in reinforcement learning, which equals $\mat{v} = (\mat{I} - \lambda \mat{P})^{-1} \mat{r}$. We can easily convert this into a feasibility problem, to find a feasible $\mat{x}$, by replacing the objective function with an inequality, $W_{\min} \leq \mat{\pi}^\top \mat{v} \leq W_{\max}$. With slight modifications, we can also formulate reachability and hitting time: the full formulations are in Appendix~\ref{sec:app:other_formulations}.

\section{Solving the Optimization Problem}
\label{sec:solving}

Under the assumptions that the functions $f_1, f_2, \ldots, f_{k_f}$ are MILP-representable and $\mathcal{X}$ is MILP-representable, the optimization problem above is a \emph{mixed-integer bilinear program}, as we have a bilinear objective and a bilinear constraint. We will leverage the observation that strong bounds on variables are crucial for solving bilinear programs, and will derive such bounds by decomposing our problem and using various results from Markov theory.

Our strategy is as follows: obtain bounds on the elements of $\mat{\theta}$ by solving a series of smaller MILPs, propagate these bounds to the parameters via the affine equalities, obtain bounds on $\mat{v}$ using linear algebra arguments, and then tighten the $\mat{v}$ bounds using interval matrix analysis. All proofs are in Appendix \ref{sec:app:proofs_solving}.

\paragraph{Bounds on $\mat{\theta}$}
For each $\theta_i$, we minimize and maximize it by solving two smaller MILPs, over the set $\mathcal{X}$. Namely, for each $i \in [k_f]$, for each $j \in [\ell_i]$, solve for $\minmax_{\mat{x} \in \mathcal{X}} \theta_{i,j} \; \text{s.t.} \; \mat{\theta_i} = [\theta_{i,1}, \ldots, \theta_{i, \ell_i}] = f_i(\mat{x})$, which are MILPs as $\mathcal{X}$ and $f_1, \ldots f_{k_f}$ are MILP-representable.

\paragraph{Bounds on $\mat{\pi}, \mat{P}, \mat{r}$}
We now propagate the bounds on $\mat{\theta}$ to the parameters $\mat{\pi}, \mat{P}, \mat{r}$ via the affine equalities.

\begin{lemma}
    \label{lem:affine_bounds}
    Let $\mat{\theta} \in \mathbb{R}^\ell$ be a vector where each component satisfies the bounds $\theta_i^{\min} \leq \theta_i \leq \theta_i^{\max}$ for all $i = 1, 2, \ldots, \ell$. Consider an affine transformation defined by $\mat{y} = \mat{A} \mat{\theta} + \mat{b}$, where $\mat{y} \in \mathbb{R}^k$, $\mat{A} \in \mathbb{R}^{k \times \ell}$, and $\mat{b} \in \mathbb{R}^k$. Then, each component $y_i$ of $\mat{y}$ satisfies $b_i + \sum_{j=1}^\ell A_{ij} \cdot \left(\theta_j^{\min} \ind_{A_{ij} \geq 0} + \theta_j^{\max} \ind_{A_{ij} < 0}\right) \leq y_i \leq b_i + \sum_{j=1}^\ell A_{ij} \cdot \left(\theta_j^{\max} \ind_{A_{ij} \geq 0} + \theta_j^{\min} \ind_{A_{ij} < 0}\right)$.
\end{lemma}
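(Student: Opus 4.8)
The plan is to prove the two-sided bound \emph{componentwise} and to exploit the separability of the affine map. Fixing a row index $i$, I would expand the $i$-th component as $y_i = b_i + \sum_{j=1}^\ell A_{ij}\theta_j$ and observe that, once $i$ is fixed, this is an affine function of the variables $\theta_1,\dots,\theta_\ell$ that is \emph{separable}: each $\theta_j$ appears in exactly one term $A_{ij}\theta_j$ and is constrained only to its own interval $[\theta_j^{\min},\theta_j^{\max}]$, independently of the others. Because the feasible region is a product of intervals, the extremum of the sum equals the sum of the extrema of the individual terms, and no cross-coordinate interaction can occur.

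The next step is to bound each term $A_{ij}\theta_j$ by a case split on the sign of the coefficient. When $A_{ij}\geq 0$, multiplication by $A_{ij}$ is order-preserving, giving $A_{ij}\theta_j^{\min}\leq A_{ij}\theta_j\leq A_{ij}\theta_j^{\max}$; when $A_{ij}<0$, multiplication reverses the inequalities, giving $A_{ij}\theta_j^{\max}\leq A_{ij}\theta_j\leq A_{ij}\theta_j^{\min}$. These two cases merge cleanly using the indicators: the lower bound on the term is $A_{ij}\bigl(\theta_j^{\min}\ind_{A_{ij}\geq 0}+\theta_j^{\max}\ind_{A_{ij}<0}\bigr)$ and its upper bound is $A_{ij}\bigl(\theta_j^{\max}\ind_{A_{ij}\geq 0}+\theta_j^{\min}\ind_{A_{ij}<0}\bigr)$. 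Summing these term-by-term bounds over $j$ and adding the constant $b_i$ yields exactly the lower and upper expressions claimed in the statement.

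Finally, I would note that the bounds are in fact tight, i.e., attained on the box. The upper bound is achieved by the single feasible point $\theta_j=\theta_j^{\max}$ for every $j$ with $A_{ij}\geq 0$ and $\theta_j=\theta_j^{\min}$ otherwise, and the lower bound by the complementary choice; since these selections are made coordinatewise within a product of intervals, they never conflict. I do not expect a genuine obstacle here — the result is elementary interval arithmetic — and the only point requiring care is the sign-dependent reversal of inequalities and keeping the indicator bookkeeping consistent between the two sides. The separability observation is what makes the term-by-term argument valid and what guarantees attainability, so I would state it explicitly rather than treat the bound as merely a sufficient (possibly loose) estimate.
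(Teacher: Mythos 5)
Your proof is correct and follows essentially the same route as the paper's: expand $y_i$ componentwise, split on the sign of $A_{ij}$ to bound each term $A_{ij}\theta_j$, and sum. The added observation that the bounds are attained on the box (and hence tight) is a nice extra not stated in the paper, but the core argument is identical.
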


\paragraph{Bounds on $\mat{v}$}
Next, we derive bounds on $\mat{v}$ using well-established facts from Markov theory. First, as $\mat{v} = (\mat{I} - \lambda \mat{P})^{-1} \mat{r}$, we analyze bounds on $(\mat{I} - \lambda \mat{P})^{-1}$.
\begin{lemma}
    \label{lem:inverse_bounds}
    Let $\mat{P}$ be row-stochastic and $\lambda \in (0,1)$. Then, each element in $(\mat{I} - \lambda \mat{P})^{-1}$ is in the interval $[0, 1/(1-\lambda)]$ and the row sums are all equal to $1/(1-\lambda)$.
\end{lemma}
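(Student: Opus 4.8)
The plan is to express the inverse as a convergent Neumann series and then read off all three claims from that series. First I would argue that $(\mat{I} - \lambda \mat{P})^{-1}$ is well-defined and admits the expansion $(\mat{I} - \lambda \mat{P})^{-1} = \sum_{t=0}^\infty \lambda^t \mat{P}^t$. This requires the spectral radius condition $\rho(\lambda \mat{P}) < 1$. Since $\mat{P}$ is row-stochastic, $\mat{P} \mat{1} = \mat{1}$, so $1$ is an eigenvalue, and by the Gershgorin circle theorem (each row sums to $1$ with nonnegative entries) every eigenvalue lies in the closed unit disk, giving $\rho(\mat{P}) = 1$. Hence $\rho(\lambda \mat{P}) = \lambda \rho(\mat{P}) = \lambda < 1$, which guarantees convergence of the series and its equality with the inverse. (Equivalently, one can avoid the spectral radius entirely by noting $\|\lambda \mat{P}\|_\infty = \lambda < 1$, since the induced $\infty$-norm of a nonnegative matrix is its maximum row sum.)

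Next I would establish nonnegativity, which yields the lower bound. Each power $\mat{P}^t$ is itself row-stochastic, hence entrywise nonnegative, so every term $\lambda^t \mat{P}^t$ is entrywise nonnegative and so is their sum. Thus each entry of $(\mat{I} - \lambda \mat{P})^{-1}$ is at least $0$.

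For the row sums, I would multiply the series on the right by $\mat{1}$. Because $\mat{P} \mat{1} = \mat{1}$ implies $\mat{P}^t \mat{1} = \mat{1}$ for all $t$, we get $(\mat{I} - \lambda \mat{P})^{-1} \mat{1} = \sum_{t=0}^\infty \lambda^t \mat{P}^t \mat{1} = \sum_{t=0}^\infty \lambda^t \mat{1} = \tfrac{1}{1-\lambda} \mat{1}$, which is precisely the vector of row sums, each equal to $1/(1-\lambda)$. The upper bound then follows immediately: within any single row the entries are nonnegative and sum to $1/(1-\lambda)$, so each individual entry is at most $1/(1-\lambda)$. Combining the two bounds gives that every element lies in $[0, 1/(1-\lambda)]$.

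I do not expect a genuine obstacle here; the proof is elementary once the Neumann-series representation is justified. The only step requiring a moment of care is the convergence/validity of the series, i.e., verifying $\rho(\lambda \mat{P}) < 1$, and I would present the norm argument $\|\lambda \mat{P}\|_\infty = \lambda < 1$ as the cleanest route since it simultaneously certifies convergence and keeps the argument self-contained.
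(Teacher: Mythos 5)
Your proof is correct, and the core computation — expanding $(\mat{I}-\lambda\mat{P})^{-1}$ as the Neumann series, using that every power $\mat{P}^t$ is row-stochastic to reduce the row sums to the geometric series $\sum_t \lambda^t = \tfrac{1}{1-\lambda}$, then deducing the upper bound from nonnegativity plus the fixed row sum — is exactly the paper's argument. The one genuine difference is how nonnegativity is obtained: the paper invokes M-matrix theory (it observes that $\mat{I}-\lambda\mat{P}$ is a non-singular M-matrix and cites Theorem 2.3 in Chapter 6 of Berman--Plemmons for nonnegativity of the inverse), whereas you read nonnegativity off the series itself, term by term, after explicitly certifying convergence via $\|\lambda\mat{P}\|_\infty = \lambda < 1$. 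Your route is more elementary and self-contained — it needs no external citation, and it supplies the convergence justification that the paper leaves implicit when it writes down the Neumann expansion. What the paper's framing buys is consistency with the machinery it reuses later: the M-matrix viewpoint is exactly what drives Theorem~\ref{thm:interval_m_matrix} on interval M-matrices and the Gauss--Seidel hull guarantee, so introducing it here makes the subsequent development feel like a natural extension rather than a change of toolkit.
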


Now we can propagate the above bounds to $\mat{v}$.

\begin{lemma}
    \label{lem:v_bounds}
    Let $r_i^{\min} \leq r_i \leq r_i^{\max}$. Then, each element in the vector $\mat{v}$ satisfies $\gamma \min_j r_j^{\min} \leq v_i \leq \gamma \max_j r_j^{\max}$, where $\gamma = \frac{1}{1-\lambda}$.
\end{lemma}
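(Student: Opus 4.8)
The plan is to write $\mat{v}$ explicitly in terms of the nonnegative matrix $\mat{M} := (\mat{I} - \lambda \mat{P})^{-1}$ and reduce the claim to the two facts already established in Lemma~\ref{lem:inverse_bounds}: every entry $M_{ij}$ is nonnegative, and each row of $\mat{M}$ sums to exactly $\gamma = 1/(1-\lambda)$. Writing $v_i = \sum_{j=1}^n M_{ij} r_j$, the nonnegativity of the weights $M_{ij}$ is exactly what lets us bound each $v_i$ by replacing the $r_j$ with their extreme values without flipping any inequality.

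Concretely, for the upper bound I would chain $v_i = \sum_j M_{ij} r_j \le \sum_j M_{ij} r_j^{\max} \le \bigl(\max_k r_k^{\max}\bigr) \sum_j M_{ij} = \gamma \max_k r_k^{\max}$, where the first inequality uses $M_{ij} \ge 0$ together with $r_j \le r_j^{\max}$, the second uses $M_{ij} \ge 0$ together with $r_j^{\max} \le \max_k r_k^{\max}$, and the final equality is the row-sum identity from Lemma~\ref{lem:inverse_bounds}. The lower bound is entirely symmetric, using $r_j \ge r_j^{\min} \ge \min_k r_k^{\min}$ in place of the upper estimates. Since this holds for every row index $i$, the stated componentwise bounds follow.

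The only structural point worth stating carefully is that both inequalities rely essentially on the sign information: it is the nonnegativity of $\mat{M}$ (not just the per-entry upper bound $1/(1-\lambda)$) that permits pulling the scalar $\max_k r_k^{\max}$ outside the sum and then collapsing $\sum_j M_{ij}$ to $\gamma$. I do not expect any genuine obstacle here; the content of the result is entirely front-loaded into Lemma~\ref{lem:inverse_bounds}, and this lemma is just the observation that a nonnegative combination of scalars with weights summing to $\gamma$ lies between $\gamma$ times the smallest and $\gamma$ times the largest scalar. One could even phrase it as: $v_i/\gamma$ is a convex combination of the entries of $\mat{r}$, hence lands in $[\min_j r_j, \max_j r_j] \subseteq [\min_j r_j^{\min}, \max_j r_j^{\max}]$.
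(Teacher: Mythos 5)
Your proposal is correct and follows essentially the same route as the paper's proof: both write $v_i = \sum_j (\mat{I}-\lambda\mat{P})^{-1}_{ij} r_j$ and invoke Lemma~\ref{lem:inverse_bounds}'s two facts (nonnegative entries, row sums equal to $\gamma$) to bound the weighted combination by concentrating all weight on the extreme values of $\mat{r}$. Your version simply spells out the inequality chain more explicitly than the paper does; there is no substantive difference.
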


Lemma~\ref{lem:v_bounds} provides valid bounds on $\mat{v}$. However, these bounds are not tight, as Lemma~\ref{lem:inverse_bounds} applies to \emph{any} row-stochastic matrix, and does not take into account the specific element-wise bounds on $\mat{P}$ that we derived from Lemma~\ref{lem:affine_bounds}. In order to derive tighter bounds on $\mat{v}$, we need to incorporate the specific element-wise bounds on $\mat{P}$. 
This is difficult because of the matrix inverse. In the following section, we show how to solve this problem with interval matrix analysis and hence derive tighter bounds on $\mat{v}$.

\subsection{Tightening bounds on $\mat{v}$}
\label{subsec:tightening_bounds}

Interval matrix analysis studies a generalization of matrices (or vectors), where each element is an \emph{interval} instead of a real number (see \citep{neumaier1984linear} for an overview). Equivalently, it can be viewed as studying the set of all matrices that lie within given intervals. There are easy generalizations of arithmetic of real numbers to intervals, which can be used to generalize matrix arithmetic to interval matrices. However, inverting an interval matrix is difficult due to the inherent nonlinearity of the inverse operation. Consider the equivalent problem of solving a linear system with interval matrices. The solution set is not rectangular, so cannot be described as an interval matrix. Hence, we wish to find the interval matrix of smallest radius that contains the solution set, known as the \emph{hull} \cite{neumaier1984linear}.

Due to the interval extensions of the basic arithmetic operations, it is possible to generalize the Gauss-Seidel method, a common iterative procedures for solving linear systems, to interval matrices. Beginning with an initial enclosure of the solution set, it iteratively refines the intervals and returns a smaller enclosure of the solution set. We provide a full description of the Gauss-Seidel method in Appendix~\ref{sec:app:gauss_seidel}. The interval Gauss-Seidel method is optimal in the sense that it provides the smallest enclosure of the solution set out of a broad class of algorithms \cite{neumaier1984linear}.

Hence, we can apply the Gauss-Seidel method to tighten the bounds on $\mat{v}$, as $(\mat{I} - \lambda \mat{P}) \mat{v} = \mat{r}$ is a linear system. We use the bounds on $\mat{P}$ and $\mat{r}$ from Lemma~\ref{lem:affine_bounds} to form the interval matrices, and for our initial guess, we can use the bounds derived from Lemma~\ref{lem:v_bounds}.

Still, Gauss-Seidel is not guaranteed to obtain the hull itself. A well-known sufficient condition for Gauss-Seidel to obtain the hull is that the matrix in the linear system is an \emph{interval M-matrix}, a generalization of the notion of an M-matrix \cite{berman1994nonnegative} to interval matrices. Suppose a matrix $\mat{M}$ is element-wise bounded by $\mat{M^{\max}}$ and $\mat{M^{\min}}$. It is an \emph{interval M-matrix} if and only if $\mat{M^{\min}}$ is a (non-singular) real M-matrix in the usual sense and $\mat{M^{\max}}$ has no positive off-diagonal elements \cite{neumaier1984linear}. Hence we can  prove a necessary and sufficient condition for $\mat{I} - \lambda \mat{P}$ to be an interval M-matrix.

\begin{theorem}
    \label{thm:interval_m_matrix}
    Let $\mat{P}$ be a row-stochastic matrix bounded element-wise by $\mat{P^{\min}} \leq \mat{P} \leq \mat{P^{\max}}$. Then, $\mat{I} - \lambda \mat{P}$ is an interval M-matrix if and only if $\rho(\mat{P}^{\max}) \leq \frac{1}{\lambda}$, where $\rho$ is the spectral radius.
\end{theorem}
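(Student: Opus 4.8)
The plan is to reduce the claim directly to the characterization of interval M-matrices recalled immediately before the statement, namely that an interval matrix with endpoints $\mat{M^{\min}} \le \mat{M^{\max}}$ is an interval M-matrix iff $\mat{M^{\min}}$ is a (non-singular) real M-matrix and $\mat{M^{\max}}$ has no positive off-diagonal entries. The first step is to identify the correct endpoints for $\mat{I} - \lambda \mat{P}$. Since $\lambda > 0$, multiplying the element-wise bounds $\mat{P^{\min}} \le \mat{P} \le \mat{P^{\max}}$ by $-\lambda$ reverses them, so that $\mat{I} - \lambda \mat{P^{\max}} \le \mat{I} - \lambda \mat{P} \le \mat{I} - \lambda \mat{P^{\min}}$. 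Hence the lower endpoint is $\mat{M^{\min}} = \mat{I} - \lambda \mat{P^{\max}}$ and the upper endpoint is $\mat{M^{\max}} = \mat{I} - \lambda \mat{P^{\min}}$. This sign flip is the one place where care is needed in setting up the two conditions.

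Next I would dispose of the off-diagonal condition, which turns out to be free. The off-diagonal $(i,j)$ entries of $\mat{M^{\max}} = \mat{I} - \lambda \mat{P^{\min}}$ are exactly $-\lambda P^{\min}_{ij}$, and these are $\le 0$ because $\lambda > 0$ and the lower bounds on transition probabilities satisfy $P^{\min}_{ij} \ge 0$. So $\mat{M^{\max}}$ always has nonpositive off-diagonals, and the entire statement collapses to a single requirement: that $\mat{M^{\min}} = \mat{I} - \lambda \mat{P^{\max}}$ be a non-singular M-matrix. Note that $\mat{M^{\min}}$ is itself a Z-matrix, since its off-diagonals $-\lambda P^{\max}_{ij}$ are nonpositive as well.

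The final step is to invoke the spectral-radius characterization of M-matrices (Berman--Plemmons): a Z-matrix written as $s\mat{I} - \mat{B}$ with $\mat{B} \ge 0$ is a non-singular M-matrix iff $s > \rho(\mat{B})$. Writing $\mat{M^{\min}} = 1\cdot \mat{I} - \mat{B}$ with $\mat{B} = \lambda \mat{P^{\max}} \ge 0$, and using homogeneity $\rho(\lambda \mat{P^{\max}}) = \lambda\,\rho(\mat{P^{\max}})$ for $\lambda > 0$, the condition becomes $1 > \lambda\,\rho(\mat{P^{\max}})$, i.e. $\rho(\mat{P^{\max}}) < 1/\lambda$. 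Because this characterization is itself an equivalence, both directions of the ``if and only if'' follow simultaneously, with no extra argument. If instead one permits singular M-matrices (the spectral condition relaxing to $s \ge \rho(\mat{B})$), the bound relaxes to $\rho(\mat{P^{\max}}) \le 1/\lambda$, matching the non-strict inequality in the statement; I would verify which convention the definition intends at the boundary case $\rho(\mat{P^{\max}}) = 1/\lambda$, where $\mat{M^{\min}}$ is singular.

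I do not expect a genuine obstacle here: once the right equivalent definition of M-matrix is chosen, the proof is essentially bookkeeping. The only substantive choice is to use the spectral-radius form of the M-matrix property rather than, say, the inverse-positivity or principal-minor forms, both of which would force a more laborious argument; the sign reversal identifying $\mat{M^{\min}}$ and the scaling of the spectral radius by $\lambda$ are the two spots where an error could creep in.
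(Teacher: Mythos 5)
Your proposal is correct and takes essentially the same route as the paper: identify the interval endpoints $\mat{M^{\min}} = \mat{I} - \lambda\mat{P^{\max}}$ and $\mat{M^{\max}} = \mat{I} - \lambda\mat{P^{\min}}$, note that the off-diagonal condition on $\mat{M^{\max}}$ holds automatically, and reduce the claim to the Berman--Plemmons spectral-radius characterization of M-matrices via $\rho(\lambda\mat{P^{\max}}) = \lambda\,\rho(\mat{P^{\max}})$. The two spots where you exercise extra care are actually points where your write-up is tighter than the paper's: the paper attributes the nonpositive off-diagonals of $\mat{M^{\max}}$ to the entries of $\mat{P^{\max}}$ when the relevant fact is $P^{\min}_{ij} \geq 0$, and it invokes the non-strict condition $1 \geq \rho(\lambda\mat{P^{\max}})$ even though its stated definition of an interval M-matrix requires the lower endpoint to be a \emph{non-singular} M-matrix --- precisely the boundary case $\rho(\mat{P^{\max}}) = 1/\lambda$ that you flag as needing a convention check.
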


Notably, we only need check the spectral radius of the matrix formed by the upper bounds on $\mat{P}$. There are various conditions that can be used to obtain the necessary bound on the spectral radius. Since $\lambda < 1$, we have $1/\lambda > 1$, so if we can show that the spectral radius of the upper bound matrix is less than $1$, then this automatically implies the condition in Theorem~\ref{thm:interval_m_matrix}, and hence that the Gauss-Seidel method will obtain the hull, i.e., the tightest possible rectangular bounds on $\mat{v}$. This may occur if, for example, the row sums of the upper bounds are $\leq 1$.

\begin{example}
    Let $\lambda=0.97$, and suppose we have the following bounds on $\mat{P}$ and $\mat{r}$:
    \begin{align*}
        \mat{P}:
        \begin{bmatrix}
            [0.5, 0.6] & [0.2, 0.5] \\
            [0.1, 0.4] & [0.5, 0.6]
        \end{bmatrix},\quad
        \mat{r}:
        \begin{bmatrix}
            [0, 100] \\
            [0, 100]
        \end{bmatrix}
    \end{align*}
    The bounds on $\mat{v}$ are:
    \begin{align*}
        \begin{array}{c@{\hskip 1cm}c}
            \text{Initial (Lemma~\ref{lem:v_bounds})} & \text{After Gauss-Seidel} \\
            \begin{bmatrix}
                [0, 6666.67] \\
                [0, 6666.67]
            \end{bmatrix} 
            &
            \begin{bmatrix}
                [39.36, 6666.67] \\
                [104.50, 6666.67]
            \end{bmatrix}
        \end{array}
    \end{align*}
    Note that the spectral radius of the upper bound matrix of $\mat{P}$ is $1.05$, which is greater than $1/\lambda = 1/0.97 \approx 1.03$. So, Gauss-Seidel will not necessarily obtain the hull, as it is not an interval M-matrix, by Theorem~\ref{thm:interval_m_matrix}.

    If we slightly tighten the upper bounds on $\mat{P}$ by changing the upper bound on $P_{2,1}$ to $0.3$, then the bounds on $\mat{v}$ are:
    \begin{align*}
        \begin{array}{c@{\hskip 1cm}c}
            \text{Initial (Lemma~\ref{lem:v_bounds})} & \text{After Gauss-Seidel} \\
            \begin{bmatrix}
                [0, 6666.67] \\
                [0, 6666.67]
            \end{bmatrix} 
            &
            \begin{bmatrix}
                [39.37, 4694.80] \\
                [104.50, 3746.86]
        \end{bmatrix}
        \end{array}
    \end{align*}
    Indeed, now the spectral radius of the upper bound matrix of $\mat{P}$ is $0.99$, which is less than $1/\lambda = 1.03$, so by Theorem~\ref{thm:interval_m_matrix}, Gauss-Seidel returns the hull. Note the tighter bounds than the prior case.
\end{example}

\subsection{Solving the final optimization problem}
At last, we can assert the bounds found above for $\mat{\pi}, \mat{P}, \mat{r}, \mat{v}$ as constraints and solve the full bilinear program with any out-of-the-box solver. We summarize our procedure in Algorithm~\ref{alg:solving}.

\begin{algorithm}
    \caption{Decomposition and bound propagation scheme to optimize the total reward}
    \label{alg:solving}
    \begin{algorithmic}
        \STATE {\bfseries Input:} Instance of the infinite-horizon total reward optimization problem
        \FOR{$i=1$ {\bfseries to} $k_f$, {$j=1$ {\bfseries to} $\ell_i$}}
            \STATE Minimize and maximize $\theta_{i,j}$ over $\mathcal{X}$
        \ENDFOR
        \STATE Propagate the bounds on $\mat{\theta}$ to $\mat{\pi}, \mat{P}, \mat{r}$ using Lemma~\ref{lem:affine_bounds}
        \STATE Obtain initial bounds on $\mat{v}$ using Lemma~\ref{lem:v_bounds}
        \STATE Tighten the bounds on $\mat{v}$ using Gauss-Seidel
        \STATE Optimize $\mat{\pi}^\top \mat{v}$ with an out-of-the-box solver, with obtained bounds on $\mat{\pi}, \mat{P}, \mat{r}, \mat{v}$ as constraints
    \end{algorithmic}
\end{algorithm}

Our algorithm works with any standard bilinear or MILP solver, avoiding the need for specialized branch-and-bound schemes or custom callbacks that require expert implementation. The bound propagation and Gauss-Seidel are also straightforward to implement. Crucially, our method solves the problem to global optimality if the underlying bilinear solver can, and so we do not discuss optimality guarantees further, as, e.g., Gurobi can solve bilinear problems to global optimality.

\paragraph{Extensions: reachability and hitting time}

Appendix ~\ref{sec:app:solving_other_formulations} shows that our results also extend to reachability and hitting time, after handling technical details arising from the invertibility of $\mat{I} - \mat{Q}$. 

\paragraph{Extensions: special cases}

If one or more of $\mat{\pi}, \mat{P}, \mat{r}$ are fixed, then the problem may become much easier to solve. For example, if $\mat{P}$ and $\mat{r}$ are fixed, then $\mat{v}$ is completely determined as the solution to a linear system, so we can eliminate the Bellman constraint, and the objective becomes simply linear in $\pi$. We enumerate all of these cases in Appendix~\ref{sec:app:special_cases}.

\section{Implementation}
\label{sec:implementation}

We developed the Python package \texttt{markovml} to specify Markov chains or reward processes with embedded pretrained machine learning models. Our domain-specific language lets users:
\begin{enumerate}[noitemsep,topsep=0pt]
    \item Instantiate a Markov process
    \item Add pretrained ML models from \texttt{sklearn} \cite{scikit-learn} and PyTorch \cite{paszke2019pytorch}
    \item Link model outputs to Markov parameters with affine equalities 
    \item Include extra linear inequalities
    \item Specify the feature set with MILP constraints
    \item Optimize reachability, hitting time, or total reward
\end{enumerate}

Built on \texttt{gurobipy} \cite{gurobi}, our package supports a variety of regression and classification models, including linear, tree-based, and neural networks. For some models, the MILP formulation is provided by \texttt{gurobi-machinelearning} \cite{gurobi_ml}, while for some models (e.g., softmax for classifiers), we implemented the MILP formulation ourselves. We also allow the user to build ``if-then" rules in a natural syntax. Our package fully implements Algorithm~\ref{alg:solving}. See Appendix~\ref{sec:app:implementation} for more details.

\section{Numerical experiments}
\label{sec:experiments}

We compare our method to solving the optimization problem directly with an out-of-the-box solver. Our experiments will analyze the following drivers of complexity of the bilinear program: the number of states, the number of ML models, and the model complexity, notably for trees and neural networks.

\paragraph{Setup}

For our experiments, we generate Markov chains and train ML models on randomly generated data. For rewards, we train regression models and for probabilities, we train classifier models, with parameters either linked to outputs or set to constants. For each configuration in our experiments, we generate 10 instances, and solve the optimization problem directly with an out-of-the-box solver (Gurobi) and with our method. We perform the following experiments: (1) varying the number of states, (2) varying the number of ML models, (3) varying decision tree depth, and (4) varying the architecture of neural networks. We record runtime, objective value, and optimizer status. For our method, runtime includes the full runtime, i.e., counting solving the smaller MILPs, propagating bounds, interval Gauss-Seidel, and the final optimization step. We describe the full setup and analysis in Appendix~\ref{sec:app:num_exp}. 

\begin{figure*}[!ht]
    \begin{center}
    \includegraphics[width=\textwidth]{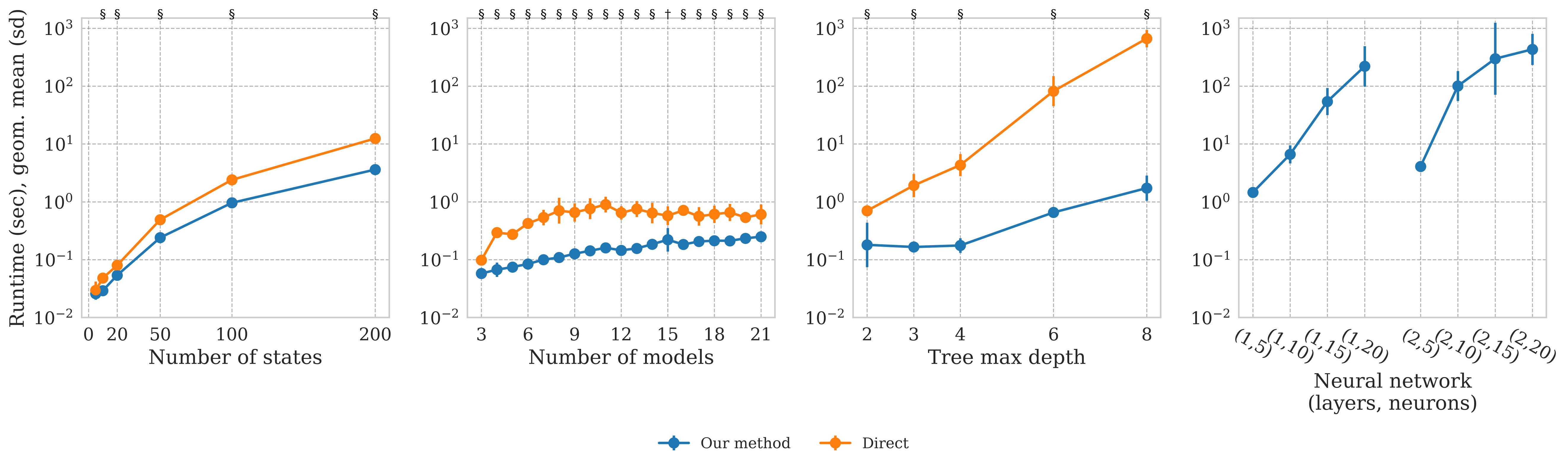}
    \caption{Runtimes of our method versus direct solving. Each panel shows results from experiments 1-4 (left to right). Points represent geometric means with error bars indicating standard deviation. Statistical significance from paired t-tests shown above: * ($p < 0.05$), † ($p < 0.01$), and § ($p < 0.001$).}
    \label{fig:resultsplot}
    \end{center}
    \vskip -0.2in
\end{figure*}

\begin{figure*}[!ht]
    \begin{center}
    \includegraphics[width=\textwidth]{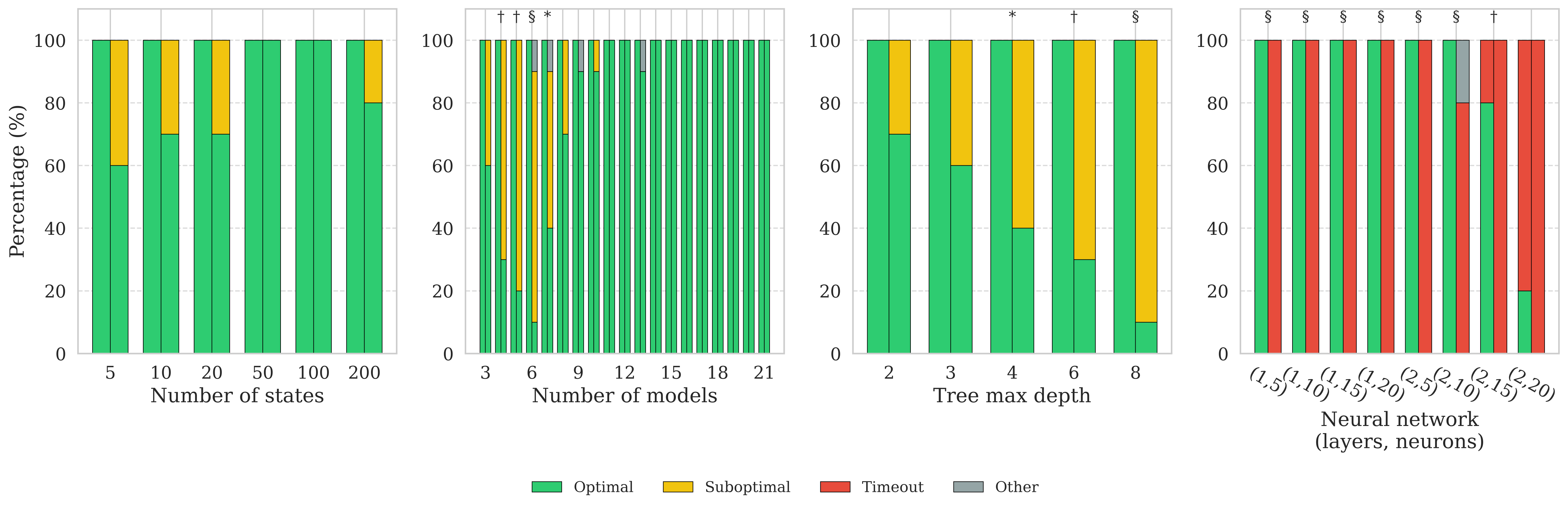}
    \caption{Proportion (\%) of instances categorized as optimal, suboptimal, timed out, or other. Each panel shows experiments 1-4 (left to right). Our method is the left bar and direct method is the right bar. Statistical significance from $\chi^2$ shown above: * ($p < 0.05$), † ($p < 0.01$), and § ($p < 0.001$).}
    \label{fig:resultspropsplot}
    \end{center}
    \vskip -0.2in
\end{figure*}

\subsection{Results}

Figure \ref{fig:resultsplot} shows that our method consistently outperforms direct solving and  scales much better with all measures of complexity, and the differences are statistically significant, often at 1\% and 0.1\% significance. This is despite our method involving multiple optimization problems.

The difference is particularly striking as models get more complex: in the tree depth experiments, our method is on average 117x faster for trees of depth $6$ (direct mean 82.1 sec, our method mean 0.7 sec), and 391x faster for depth $8$ (direct mean 665.3 sec, our method mean 1.7 sec) -- reaching a 1000x speedup on some instances. For neural networks, solving the problem directly times out in every instance, as seen in Figure \ref{fig:resultspropsplot}, even for the smallest architecture with 1 hidden layer of 5 neurons. However, our method solves all instances with 1 hidden layer and most instances with 2 hidden layers, although it often times out with 2 hidden layers of 20 neurons each. Figure \ref{fig:resultspropsplot} also shows that the direct method often struggles with proving optimality.

In order to isolate the effects of different parts of our method, we performed an ablation study. We redid the decision trees experiment fixing a tree depth of $8$. We ablated, in turn: (1) finding bounds on $\mat{\theta}$, (2) propagating bounds to $\mat{\pi}, \mat{P}, \mat{r}$, (3) initial bounds on $\mat{v}$, and (4) tightening bounds on $\mat{v}$. Ablating a step means ablating it and all subsequent steps. As shown in Table \ref{tab:ablation}, the biggest contributor to our method's speedup is obtaining the initial bounds on $\mat{v}$, which causes a 153x speedup on average compared to ablating it. 

\begin{table}[t]
\caption{Geometric means of runtimes and number of instances solved for ablations of our method. Ablation numbers refer to ablating: (1) finding bounds on $\mat{\theta}$, (2) propagating bounds to $\mat{\pi}, \mat{P}, \mat{r}$, (3) initial bounds on $\mat{v}$, and (4) tightening bounds on $\mat{v}$. Lastly, None refers to no ablations. Note that ablating (1) is equivalent to directly solving, and None is equivalent to our full method.}
\label{tab:ablation}
\vskip -0.2in
\begin{center}
\begin{tiny}
\begin{tabular}{cccccc}
\toprule
& (1) & (2) & (3) & (4) & None \\
\midrule
\makecell{Runtime (sec), \\ geom. mean (sd)} &
\makecell{710.79 \\ (1.22)} & 
\makecell{292.36 \\ (1.64)} & 
\makecell{282.42 \\ (1.59)} & 
\makecell{1.85 \\ (1.55)} & 
\makecell{1.89 \\ (1.56)} \\
\midrule
Instances solved & 6/10 & 8/10 & 7/10 & 10/10 & 10/10 \\
\bottomrule
\end{tabular}
\end{tiny}
\end{center}
\vskip -0.3in
\end{table}

\section{Case Study}

As a short case study, we re-analyze the healthcare cost-benefit analysis of \citet{maaz2025cost}, which models patients after an out-of-hospital cardiac arrest (OHCA). Patients are discharged with a \emph{modified Rankin scale (mRS)}, a measure of post-arrest morbidity (0: perfect health, 6: death), which determines yearly mortality and quality of life. The Markov chain in their model has 20 states in total: discharged with mRS 0-6, first-year post-discharge with mRS 0-5, later-year post-discharge with mRS 0-5, and death. The only allowable transitions are, for each mRS, discharge to first-year, first-year to later-year, later-year to later-year; and, any state to death. A patient's discharge mRS, i.e., $\mat{\pi}$, is given by an ML model that takes as input the patient's age and sex, features of the OHCA, and the time-to-defibrillation. $\mat{P}$ and $\mat{r}$, representing the \textit{net monetary benefit} (NMB), a common health economic metric, are fixed and taken from literature.

\citet{maaz2025cost} perform their analysis on data from 22,017 real OHCAs in Canada. We obtained the data they used, trained a decision tree, and reconstructed their Markov process in our software package \texttt{markovml}. By setting appropriate constraints to build $\mathcal{X}$, we can now perform subgroup analyses. For example, we found that the best-case (or, maximum) NMB for men is about 50\% higher than for women. This reflects lower mortality among men, i.e., lower probability of being discharged with mRS 6 (death). Indeed, this is corroborated by the healthcare literature which finds that women have at least 50\% higher mortality than men after an OHCA \cite{parikh2020association}. Notably, doing this analysis as an optimization problem allowed us to solve it in an automated way, without requiring the underlying patient data nor simulating representative patients. We refer the reader to the code attached with our paper, which contains the ML model (due to data privacy, we cannot share the patient data), as well as further details on the implementation. 

\section{Discussion and Conclusion}

We have introduced a novel framework for the formal verification of Markov processes with learned parameters, enabling exact reasoning about systems where transition probabilities and rewards are specified by machine learning models. Our approach formulates the verification task as a bilinear program and introduces a decomposition and bound propagation scheme that significantly accelerates computation—often by orders of magnitude, while maintaining global optimality guarantees. This work contributes to the broader goal of safe and transparent deployment of ML in high-stakes settings. Our healthcare case study demonstrates how the method can support robust decision making in critical domains. We have implemented this in the software package \texttt{markovml}, which provides a flexible language for defining Markov processes, embedding a range of ML models, and optimizing key metrics.

\paragraph{Limitations}
Our experiments show that the method scales well across problem dimensions including the number of states, models, and model complexity. However, a key limitation is scalability for neural networks: our method worked well on small networks but degrades with increasing depth or width. This is consistent with known computational hardness results for neural network verification. Future work could improve performance by integrating state-of-the-art techniques from the neural network verification literature, such as $\alpha, \beta$-CROWN.

\paragraph{Future directions}
Our software package, \texttt{markovml}, is open-source and designed for extensibility. We envision future extensions that support richer ML models and eventually integrates with other neural network verifiers. We hope this work encourages further research at the intersection of probabilistic model checking and formal verification of ML.

\newpage

\bibliography{ref}
\bibliographystyle{unsrtnat}

\newpage
\appendix

\section{Formulations of feasibility, reachability, and hitting time}
\label{sec:app:other_formulations}

\subsection{Feasibility of the total infinite-horizon reward}

The total infinite-horizon reward feasibility problem can be written as:

\begin{align*}
    &\exists \mat{x} \quad \text{s.t.} \\
    &\quad W_{\min} \leq \mat{\pi}^\top \mat{v} \leq W_{\max} \\
    &\quad \mat{v} = \lambda \mat{P} \mat{v} + \mat{r}, \\
    &\quad \mat{\theta} = \begin{bmatrix} \mat{\theta}_1, \mat{\theta}_2, \cdots, \mat{\theta}_{k_f} \end{bmatrix}^\top, \\
    &\quad \mat{\theta}_i = f_i(\mat{x}) \quad \forall i \in [k_f], \\
    &\quad \mat{\pi} = \mat{A_\pi} \mat{\theta} + \mat{b_\pi}, \\
    &\quad \vect(\mat{P}) = \mat{A_P} \mat{\theta} + \mat{b_P}, \\
    &\quad \mat{r} = \mat{A_r} \mat{\theta} + \mat{b_r}, \\
    &\quad \mat{C_\pi} \mat{\pi} \leq \mat{d_\pi}, \\
    &\quad \mat{C_P} \vect(\mat{P}) \leq \mat{d_P}, \\
    &\quad \mat{C_r} \mat{r} \leq \mat{d_r}, \\
    &\quad \sum_{j=1}^n P_{ij} = 1 \quad \forall i \in [n], \\
    &\quad \sum_{i=1}^n \pi_i = 1, \\
    &\quad 0 \leq P_{ij} \leq 1 \quad \forall i,j \in [n] \times [n], \\
    &\quad 0 \leq \pi_i \leq 1 \quad \forall i \in [n], \\
    &\quad \mat{x} \in \mathcal{X}.
\end{align*}

where $W_{\min}$ and $W_{\max}$ are bounds on the total reward, which may be $\pm \infty$.

\subsection{Reachability}

To reformulate the total reward formulation for reachability to a target state $S \subseteq [n]$ from a set of transient states $T \subseteq [n], T \cap S = \emptyset$, observe that we can simply make the following substitutions: $\mat{\pi}$ gets replaced with $\mat{\tilde{\pi}}$, which is the initial probability over $T$, $\mat{P}$ gets replaced with $\mat{Q}$, which is $\mat{P}$ restricted to $T$, and $\mat{r}$ gets replaced with $\mat{R} \mat{1}$, where $\mat{R}$ is the transition matrix from $T$ to $S$. Lastly, there is no more discount rate $\lambda$. We also require $\mat{Q}$ to be strictly substochastic so that it remains invertible (i.e., the row sums are strictly less than $1$), and $\mat{R}$ and $\mat{\tilde{\pi}}$ are now substochastic (i.e., the row sums are less than or equal to $1$). The optimization problem is then:

\begin{align*}
    &\minmax \limits_{\mat{\tilde{\pi}}, \mat{Q}, \mat{R}, \mat{v}, \mat{x}, \mat{\theta}} \mat{\tilde{\pi}}^\top \mat{v} \quad \text{s.t.} \\
    &\quad \mat{v} = \mat{Q} \mat{v} + \mat{R} \mat{1}, \\
    &\quad \mat{\theta} = \begin{bmatrix} \mat{\theta}_1, \mat{\theta}_2, \cdots, \mat{\theta}_{k_f} \end{bmatrix}^\top, \\
    &\quad \mat{\theta}_i = f_i(\mat{x}) \quad \forall i \in [k_f], \\
    &\quad \mat{\pi} = \mat{A_\pi} \mat{\theta} + \mat{b_\pi}, \\
    &\quad \vect(\mat{Q}) = \mat{A_Q} \mat{\theta} + \mat{b_Q}, \\
    &\quad \vect(\mat{R}) = \mat{A_R} \mat{\theta} + \mat{b_R}, \\
    &\quad \mat{C_{\tilde{\pi}}} \mat{\tilde{\pi}} \leq \mat{d_{\tilde{\pi}}}, \\
    &\quad \mat{C_{QR}} [\vect(\mat{Q}) \vect(\mat{R})] \leq \mat{d_{QR}}, \\
    &\quad \sum_{j=1}^{|T|} Q_{ij} < 1 \quad \forall i \in [|T|], \\
    &\quad \sum_{j=1}^{|S|} R_{ij} \leq 1 \quad \forall i \in [|T|], \\
    &\quad \sum_{i=1}^{|T|} (\tilde{\pi})_i \leq 1, \\
    &\quad 0 \leq Q_{ij} < 1 \quad \forall i,j \in [|T|] \times [|T|], \\
    &\quad 0 \leq R_{ij} \leq 1 \quad \forall i,j \in [|T|] \times [|S|], \\
    &\quad 0 \leq (\tilde{\pi})_i \leq 1 \quad \forall i \in [|T|], \\
    &\quad \mat{x} \in \mathcal{X}.
\end{align*}

With a slight abuse of notation, we still maintain the vector $\mat{v}$, but is instead now defined as $\mat{v} = (\mat{I} - \mat{Q})^{-1} \mat{R} \mat{1}$. As well, note that the stochastic constraints have been replaced with substochastic constraints (strict substochastic for $\mat{Q}$).

The feasibility problem is:

\begin{align*}
    &\exists \mat{x} \quad \text{s.t.} \\
    &\quad W_{\min} \leq \mat{\tilde{\pi}}^\top \mat{v} \leq W_{\max} \\
    &\quad \mat{v} = \mat{Q} \mat{v} + \mat{R} \mat{1}, \\
    &\quad \mat{\theta} = \begin{bmatrix} \mat{\theta}_1, \mat{\theta}_2, \cdots, \mat{\theta}_{k_f} \end{bmatrix}^\top, \\
    &\quad \mat{\theta}_i = f_i(\mat{x}) \quad \forall i \in [k_f], \\
    &\quad \mat{\tilde{\pi}} = \mat{A_{\tilde{\pi}}} \mat{\theta} + \mat{b_{\tilde{\pi}}}, \\
    &\quad \vect(\mat{Q}) = \mat{A_Q} \mat{\theta} + \mat{b_Q}, \\
    &\quad \vect(\mat{R}) = \mat{A_R} \mat{\theta} + \mat{b_R}, \\
    &\quad \mat{C_{\tilde{\pi}}} \mat{\tilde{\pi}} \leq \mat{d_{\tilde{\pi}}}, \\
    &\quad \mat{C_{QR}} [\vect(\mat{Q}) \vect(\mat{R})] \leq \mat{d_{QR}}, \\
    &\quad \sum_{j=1}^{|T|} Q_{ij} < 1 \quad \forall i \in [|T|], \\
    &\quad \sum_{j=1}^{|S|} R_{ij} \leq 1 \quad \forall i \in [|T|], \\
    &\quad \sum_{i=1}^{|T|} \tilde{\pi}_i \leq 1, \\
    &\quad 0 \leq Q_{ij} < 1 \quad \forall i,j \in [|T|] \times [|T|], \\
    &\quad 0 \leq R_{ij} \leq 1 \quad \forall i,j \in [|T|] \times [|S|], \\
    &\quad 0 \leq \tilde{\pi}_i \leq 1 \quad \forall i \in [|T|], \\
    &\quad \mat{x} \in \mathcal{X}.
\end{align*}

\subsection{Hitting time}

The hitting time to a target set $S \subseteq [n]$ from a set of transient states $T \subseteq [n], T \cap S = \emptyset$ formulation is extremely similar to the reachability formulation, except that we replace $\mat{R} \mat{1}$ with $\mat{1}$. Hence, $\mat{R}$ is no longer a decision variable. Also, now, $\mat{v}$ is defined as $\mat{v} = (\mat{I} - \mat{Q})^{-1} \mat{1}$. Note again the substochastic constraint on $\mat{\tilde{\pi}}$ and the strict substochastic constraint on $\mat{Q}$. The optimization problem is:

\begin{align*}
    &\minmax \limits_{\mat{\tilde{\pi}}, \mat{Q}, \mat{v}, \mat{x}, \mat{\theta}} \mat{\tilde{\pi}}^\top \mat{v} \quad \text{s.t.} \\
    &\quad \mat{v} = \mat{Q} \mat{v} + \mat{1}, \\
    &\quad \mat{\theta} = \begin{bmatrix} \mat{\theta}_1, \mat{\theta}_2, \cdots, \mat{\theta}_{k_f} \end{bmatrix}^\top, \\
    &\quad \mat{\theta}_i = f_i(\mat{x}) \quad \forall i \in [k_f], \\
    &\quad \mat{\tilde{\pi}} = \mat{A_{\tilde{\pi}}} \mat{\theta} + \mat{b_{\tilde{\pi}}}, \\
    &\quad \vect(\mat{Q}) = \mat{A_Q} \mat{\theta} + \mat{b_Q}, \\
    &\quad \mat{C_{\tilde{\pi}}} \mat{\tilde{\pi}} \leq \mat{d_{\tilde{\pi}}}, \\
    &\quad \mat{C_{Q}} \vect(\mat{Q}) \leq \mat{d_{Q}}, \\
    &\quad \sum_{j=1}^{|T|} Q_{ij} < 1 \quad \forall i \in [|T|], \\
    &\quad \sum_{i=1}^{|T|} \tilde{\pi}_i \leq 1, \\
    &\quad 0 \leq Q_{ij} < 1 \quad \forall i,j \in [|T|] \times [|T|], \\
    &\quad 0 \leq \tilde{\pi}_i \leq 1 \quad \forall i \in [|T|], \\
    &\quad \mat{x} \in \mathcal{X}.
\end{align*}

The feasibility problem is:

\begin{align*}
    &\exists \mat{x} \quad \text{s.t.} \\
    &\quad W_{\min} \leq \mat{\tilde{\pi}}^\top \mat{v} \leq W_{\max} \\
    &\quad \mat{v} = \mat{Q} \mat{v} + \mat{1}, \\
    &\quad \mat{\theta} = \begin{bmatrix} \mat{\theta}_1, \mat{\theta}_2, \cdots, \mat{\theta}_{k_f} \end{bmatrix}^\top, \\
    &\quad \mat{\theta}_i = f_i(\mat{x}) \quad \forall i \in [k_f], \\
    &\quad \mat{\tilde{\pi}} = \mat{A_{\tilde{\pi}}} \mat{\theta} + \mat{b_{\tilde{\pi}}}, \\
    &\quad \vect(\mat{Q}) = \mat{A_Q} \mat{\theta} + \mat{b_Q}, \\
    &\quad \mat{C_{\tilde{\pi}}} \mat{\tilde{\pi}} \leq \mat{d_{\tilde{\pi}}}, \\
    &\quad \mat{C_{Q}} \vect(\mat{Q}) \leq \mat{d_{Q}}, \\
    &\quad \sum_{j=1}^{|T|} Q_{ij} < 1 \quad \forall i \in [|T|], \\
    &\quad \sum_{i=1}^{|T|} (\pi_T)_i \leq 1, \\
    &\quad 0 \leq Q_{ij} < 1 \quad \forall i,j \in [|T|] \times [|T|], \\
    &\quad 0 \leq \tilde{\pi}_i \leq 1 \quad \forall i \in [|T|], \\
    &\quad \mat{x} \in \mathcal{X}.
\end{align*}

\section{Omitted Proofs from Section~\ref{sec:solving}}
\label{sec:app:proofs_solving}

\begin{proof}[Proof of Lemma~\ref{lem:affine_bounds}]
    For a given component $y_i$, we can write the transformation as:
    \begin{align*}
        y_i = \sum_{j=1}^{\ell} A_{ij} \theta_j + b_i
    \end{align*}
    for all $i = 1, 2, \ldots, k$.

    Each term in the summation depends linearly on $\theta_j$. If $A_{ij} \geq 0$, then $A_{ij} \theta_j$ is maximized at $\theta_j^{\max}$ and minimized at $\theta_j^{\min}$. Vice versa, if $A_{ij} < 0$, then $A_{ij} \theta_j$ is maximized at $\theta_j^{\min}$ and minimized at $\theta_j^{\max}$.

    Applying this to all terms in the summation, we get the required bounds.
\end{proof}

\begin{proof}[Proof of Lemma~\ref{lem:inverse_bounds}]
    Note that the matrix $(\mat{I} - \lambda \mat{P})$ is a (non-singular) M-matrix. By Theorem 2.3 in Chapter 6 of \cite{berman1994nonnegative}, each element of the matrix is non-negative, so is $\geq 0$.

    Next, consider the row sum of $(\mat{I} - \lambda \mat{P})^{-1}$ for a row $i$: $\sum_j (\mat{I} - \lambda \mat{P})^{-1}_{ij}$, and then expand into the Neumann series: $\sum_j \sum_{k=0}^\infty \lambda^k (\mat{P}^k)_{ij}$. We can exchange the order of summation as all terms are non-negative, and then we have our sum is $\sum_{k=0}^\infty \lambda^k \sum_j (\mat{P}^k)_{ij}$. Every positive integer exponent of a row-stochastic matrix is also row-stochastic, so $\sum_j (\mat{P}^k)_{ij} = 1$, for all $k \in \mathbb{N}$. Therefore, this simplifes to the geometric series $\sum_{k=0}^\infty \lambda^k$, which equals $\frac{1}{1-\lambda}$. As each row sum is $\frac{1}{1-\lambda}$, and each element is non-negative, each element must also be at most $\frac{1}{1-\lambda}$.
\end{proof}

\begin{proof}[Proof of Lemma~\ref{lem:v_bounds}]
    Recall that $\mat{v} = (\mat{I} - \lambda \mat{P})^{-1} \mat{r}$, so $v_i = \sum_{j=1}^n (\mat{I} - \lambda \mat{P})^{-1}_{ij} r_j$. Thus, we can see $v_i$ is a positive linear combination of the elements of $\mat{r}$, where the nonnegative weights are from $(\mat{I} - \lambda \mat{P})^{-1}$ and sum to $1/(1-\lambda)$. To maximize this combination, we assign all the weight to the maximum upper bound of $\mat{r}$, and to minimize it, we assign all the weight to the minimum lower bound of $\mat{r}$. Hence, we have the required bounds.
\end{proof}

\begin{proof}[Proof of Theorem~\ref{thm:interval_m_matrix}]
    Let $\mat{M} = \mat{I} - \lambda \mat{P}$. Clearly, $\mat{M}$ is bounded element-wise by $\mat{M^{\max}} = \mat{I} - \lambda \mat{P^{\min}}$ and $\mat{M^{\min}} = \mat{I} - \lambda \mat{P^{\max}}$. The upper bound $\mat{M^{\max}}$ has no positive off-diagonal elements, as all of the elements of $\mat{P^{\max}}$ are between $0$ and $1$. So, the only thing we need to check is that $\mat{M^{\min}}$ is an M-matrix.

    From the definition of an M-matrix \cite{berman1994nonnegative}, $\mat{M^{\min}}$ is an M-matrix if and only if $1 \geq \rho(\lambda \mat{P^{\max}}) = \lambda \rho(\mat{P^{\max}})$. This completes the proof.
\end{proof}

\section{Gauss-Seidel Method for Interval Matrices}
\label{sec:app:gauss_seidel}

Our exposition here follows section 5.7 of \citet{horavcek2019interval}. Suppose we have a linear system $\mat{A} \mat{x} = \mat{b}$, where $\mat{A}$ is an interval matrix and $\mat{b}$ is an interval vector. The set of $\mat{x}$ that satisfies this system is referred to as the solution set. It is in general complicated, so we hope to find a rectangular enclosure of the solution set. The smallest such enclosure is referred to as the hull of the solution set.

Suppose we have an initial enclosure of the solution set, $\mat{x}^{(0)}$. In general this takes some effort to obtain, but we obtained this for our specific problems. Then, the interval Gauss-Seidel method proceeds as follows:

\begin{enumerate}
    \item For each variable $x_i$, compute a new enclosure of the solution set as:
    \begin{align*}
        y_i^{(k+1)} = \frac{1}{A_{ii}} \left( b_i - \sum_{j < i} A_{ij} x_j^{(k+1)} - \sum_{j > i} A_{ij} x_j^{(k)} \right)
    \end{align*}
    \item Update the enclosure of the solution set: $\mat{x}^{(k+1)} = \mat{x}^{(k)} \cap \mat{y}^{(k+1)}$.
    \item Repeat steps 1 and 2 until stopping criterion is met (either a maximum number of iterations or the difference in subsequent enclosures is less than some tolerance).
\end{enumerate}

All operations above are interval arithmetic operations. For real intervals, we have: $[a,b] + [c,d] = [a+c, b+d]$, $[a,b] - [c,d] = [a-d, b-c]$, $[a,b] \cdot [c,d] = [\min(ac, ad, bc, bd), \max(ac, ad, bc, bd)]$. Division is defined as multiplication by the reciprocal, i.e., $[a,b] / [c,d] = [a,b] \cdot [1/d, 1/c] = [\min(a/c, a/d, b/c, b/d), \max(a/c, a/d, b/c, b/d)]$, if $0 \notin [c,d]$.

Note that the zero division issue is not a concern for us, as the diagonal elements of $\mat{I} - \lambda \mat{P}$, in the total reward setting, have lower bounds always strictly positive, due to the discount factor $\lambda \in (0, 1)$. In the reachability and hitting time settings, the diagonal elements of $\mat{I} - \mat{Q}$ have lower bounds always strictly positive, due to the strict substochastic property of $\mat{Q}$. In practice, we use a small numerical offset to handle this.

\section{Solving other formulations}
\label{sec:app:solving_other_formulations}

\subsection{Reachability and hitting time}

The reachability and hitting time formulations are in Appendix~\ref{sec:app:other_formulations}. The key difference that affects our theoretical results and algorithms is that the matrix $\mat{Q}$, which is a submatrix of $\mat{P}$ restricted to the transient states, is strictly substochastic, and we don't have a discount factor $\lambda$ anymore. This introduces some technical difficulties in guaranteeing invertibility of $\mat{I} - \mat{Q}$. However, all our theoretical results all follow through with minor modifications.

\begin{lemma}
    For a strictly row-substochastic matrix $\mat{Q}$ with all elements in $[0, 1]$:
    \begin{enumerate}
        \item The spectral radius of $\mat{Q}$ is strictly less than $1$.
        \item The matrix $(\mat{I} - \mat{Q})$ is a (non-singular) M-matrix.
        \item The matrix $(\mat{I} - \mat{Q})^{-1}$ has all non-negative elements.
        \item The row sums of $(\mat{I} - \mat{Q})^{-1}$ are at most $1/(1-\alpha)$, where $\alpha = \max_i \sum_j Q_{ij}$ is the maximum row sum of $\mat{Q}$.
        \item Each element of $(\mat{I} - \mat{Q})^{-1}$ is in the interval $[0, 1/(1-\alpha)]$.
    \end{enumerate}
\end{lemma}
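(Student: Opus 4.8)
The plan is to establish the five claims in order, each leaning on the previous ones, in close parallel with the proof of Lemma~\ref{lem:inverse_bounds} (the row-stochastic analogue). The three analytic tools are: the bound on the spectral radius by the induced $\infty$-norm (maximum absolute row sum), the Neumann series expansion of the inverse, and an inductive propagation of the row-sum inequality through powers of $\mat{Q}$. Throughout, nonnegativity of $\mat{Q}$ is what lets inequalities be preserved under matrix multiplication.

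For Part 1, I would invoke the standard fact that the spectral radius is bounded above by any induced matrix norm; taking the induced $\infty$-norm and using $\mat{Q} \geq 0$ with maximum row sum $\alpha$ gives $\rho(\mat{Q}) \leq \|\mat{Q}\|_\infty = \alpha < 1$, where strictness is exactly the strict-substochastic hypothesis. Part 2 is then immediate from the definition of a nonsingular M-matrix \cite{berman1994nonnegative}: writing $\mat{I} - \mat{Q} = 1 \cdot \mat{I} - \mat{Q}$ with $\mat{Q} \geq 0$, the condition $1 > \rho(\mat{Q})$ holds by Part 1. Part 3 follows either directly from the M-matrix characterization (as in the proof of Lemma~\ref{lem:inverse_bounds}) or from the convergent Neumann series $(\mat{I} - \mat{Q})^{-1} = \sum_{k=0}^\infty \mat{Q}^k$, each term of which is entrywise nonnegative.

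I expect Part 4 to be the main obstacle, since—unlike the row-stochastic case where row sums are exactly $1$ and collapse into a clean geometric series—here the row sums are only bounded by $\alpha$, so the bound must be tracked through an inequality rather than an equality. The idea is to first observe $\mat{Q}\mat{1} \leq \alpha\mat{1}$ entrywise (this is just the definition of $\alpha$), and then propagate it inductively: using $\mat{Q} \geq 0$ to preserve the direction of the inequality, $\mat{Q}^{k}\mat{1} = \mat{Q}\left(\mat{Q}^{k-1}\mat{1}\right) \leq \mat{Q}\left(\alpha^{k-1}\mat{1}\right) = \alpha^{k-1}\left(\mat{Q}\mat{1}\right) \leq \alpha^{k}\mat{1}$, so $\mat{Q}^k\mat{1} \leq \alpha^k\mat{1}$ for all $k$. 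Summing the Neumann series row-wise—with the exchange of summation order justified by nonnegativity of all terms—the $i$-th row sum of $(\mat{I}-\mat{Q})^{-1}$ is bounded by $\sum_{k=0}^\infty \alpha^k = 1/(1-\alpha)$, which is finite precisely because $\alpha < 1$.

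Finally, Part 5 combines Parts 3 and 4: each entry is nonnegative by Part 3, and since it is a single summand of a row whose total is at most $1/(1-\alpha)$ by Part 4, it is itself at most $1/(1-\alpha)$, giving the stated interval $[0, 1/(1-\alpha)]$.
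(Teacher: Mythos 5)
Your proposal is correct, and for the first three claims it follows the same route as the paper: bound the spectral radius by the maximum row sum, conclude that $\mat{I} - \mat{Q}$ is a nonsingular M-matrix by definition, and invoke the Berman--Plemmons characterization (Theorem 2.3, Chapter 6 of \cite{berman1994nonnegative}) for entrywise nonnegativity of the inverse. Where you genuinely go beyond the paper is in parts 4 and 5: the paper's own proof stops after part 3 and never spells out the row-sum bound, implicitly treating it as analogous to the row-stochastic case of Lemma~\ref{lem:inverse_bounds}. But that analogy is not automatic, since in the substochastic setting the row sums of $\mat{Q}^k$ are only bounded above rather than exactly equal to $1$, so the geometric-series identity used there must be replaced by an inequality. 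Your inductive argument $\mat{Q}^k \mat{1} \leq \alpha^k \mat{1}$ (with nonnegativity of $\mat{Q}$ preserving the entrywise inequality under multiplication), followed by term-by-term summation of the Neumann series, is exactly the missing step, and your part 5 correctly combines nonnegativity with the row-sum bound. In short, your write-up is both correct and strictly more complete than the paper's; the paper would benefit from incorporating your parts 4--5 argument.
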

\begin{proof}
    The row sum of $\mat{Q}$ is strictly less than $1$, so, by a well-known result in linear algebra, the spectral radius of $\mat{Q}$ is strictly less than $1$. It follows that $\mat{I} - \mat{Q}$ is a (non-singular) M-matrix by the definition of an M-matrix \cite{berman1994nonnegative}, and hence that the inverse has all non-negative elements by Theorem 2.3 in Chapter 6 of \citet{berman1994nonnegative}.
\end{proof}

Note that the maximum row sum $\alpha$ is strictly less than $1$, so the formula is well-defined. We use this result to obtain the appropriate initial bounds on $\mat{v}$ for the reachability and hitting time formulations.

\begin{corollary}
    Given a Markov reachability problem with element-wise bounds on $\mat{Q}$ and $\mat{R}$, as defined in Appendix~\ref{sec:app:other_formulations}, the initial bounds on $\mat{v} = (\mat{I} - \mat{Q})^{-1} \mat{R} \mat{1}$ are:
    \begin{align*}
        0 \leq v_i \leq \frac{1}{1-\alpha} \max_j \sum_k R_{jk}^{\max}
    \end{align*}
    where $\alpha = \max_i \sum_j Q_{ij}^{\max}$ is the maximum row sum of $\mat{Q}$.
\end{corollary}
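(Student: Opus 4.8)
The plan is to reduce the statement entirely to the two structural facts about $(\mat{I} - \mat{Q})^{-1}$ established in the preceding lemma: every entry is non-negative, and every row sum is at most $1/(1-\alpha)$. First I would introduce the shorthand $\mat{M} = (\mat{I} - \mat{Q})^{-1}$ together with the vector $\mat{s} = \mat{R}\mat{1}$ of row sums of $\mat{R}$, so that the definition $\mat{v} = \mat{M}\mat{s}$ gives the componentwise expression $v_i = \sum_j M_{ij} s_j$. Observe that $\mat{R}$ is substochastic, so $R_{jk} \geq 0$ and hence every $s_j = \sum_k R_{jk} \geq 0$; combined with $M_{ij} \geq 0$ from the lemma, each term in the sum for $v_i$ is non-negative, which immediately yields the lower bound $v_i \geq 0$.

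For the upper bound I would exploit that $v_i$ is a non-negatively weighted combination of the $s_j$, exactly mirroring the total-reward argument. Because the weights $M_{ij}$ are non-negative, I can replace each $s_j$ by the largest value it can attain, bounding $s_j = \sum_k R_{jk} \leq \sum_k R_{jk}^{\max} \leq \max_{j'} \sum_k R_{j'k}^{\max}$, and then factor this (index-free) maximum out of the sum. This leaves $v_i \leq \left(\sum_j M_{ij}\right) \max_{j'} \sum_k R_{j'k}^{\max}$, in which $\sum_j M_{ij}$ is precisely the $i$-th row sum of $\mat{M}$. Applying the row-sum bound $\sum_j M_{ij} \leq 1/(1-\alpha)$ from the lemma then delivers the claimed bound $v_i \leq \frac{1}{1-\alpha}\max_{j'} \sum_k R_{j'k}^{\max}$.

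I do not expect a serious obstacle: all the real content sits in the preceding lemma, and this corollary is a direct bound-propagation step analogous to the total-reward case. The only points requiring care are keeping the index bookkeeping straight—distinguishing the outer row index $i$, the summation index $j$ over transient states, and the inner index $k$ over target states—and justifying that the entrywise bound $R_{jk}^{\max}$ may be used simultaneously across all $k$ when passing to the maximum row sum. This last step is valid precisely because the weights $M_{ij}$ are non-negative, so no cancellation can occur and replacing each $s_j$ by its maximum can only increase the value.
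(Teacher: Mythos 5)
Your proof is correct and follows essentially the same route as the paper's: the lower bound comes from the entrywise non-negativity of $(\mat{I}-\mat{Q})^{-1}$ together with the non-negativity of the row sums of $\mat{R}$, and the upper bound comes from bounding each row sum of $\mat{R}$ by $\max_j \sum_k R_{jk}^{\max}$ and then applying the row-sum bound $1/(1-\alpha)$ on $(\mat{I}-\mat{Q})^{-1}$ from the preceding lemma. Your write-up is in fact slightly more explicit than the paper's (which phrases the lower bound as ``using the smallest possible row sum, which is $0$''), but the substance is identical.
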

\begin{proof}
    First, note that the vector $\mat{R} \mat{1}$ has elements which are the row sums of $\mat{R}$. So, for each $i$, $\sum_j R_{ij}^{\min} \leq (\mat{R} \mat{1})_i \leq \sum_j R_{ij}^{\max}$. Next, noting that the row sums of $(\mat{I} - \mat{Q})^{-1}$ are at most $1/(1-\alpha)$, we have, by a similar argument as in the proof of Lemma~\ref{lem:v_bounds}, that the upper bound on $v_i$ is:
    \begin{align*}
        v_i \leq \frac{1}{1-\alpha} \max_j \sum_k R_{jk}^{\max}
    \end{align*}

    For the lower bound, unlike the argument in Lemma~\ref{lem:v_bounds}, we don't know what the row sums of $(\mat{I} - \mat{Q})^{-1}$ actually are, so we use the smallest possible row sum, which is $0$. So, we have:
    \begin{align*}
        v_i \geq 0
    \end{align*}
\end{proof}

\begin{corollary}
    Given a Markov hitting time problem with element-wise bounds on $\mat{Q}$, as defined in Appendix~\ref{sec:app:other_formulations}, the initial bounds on $\mat{v} = (\mat{I} - \mat{Q})^{-1} \mat{1}$ are:
    \begin{align*}
        0 \leq v_i \leq \frac{1}{1-\alpha}
    \end{align*}
    where $\alpha = \max_i \sum_j Q_{ij}^{\max}$ is the maximum row sum of $\mat{Q}$.
\end{corollary}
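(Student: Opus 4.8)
The plan is to recognize that the hitting-time corollary is a direct specialization of the preceding reachability argument, made even simpler by the fact that the right-hand side is $\mat{1}$ rather than $\mat{R}\mat{1}$. Writing $\mat{v} = (\mat{I} - \mat{Q})^{-1} \mat{1}$ componentwise gives $v_i = \sum_j (\mat{I} - \mat{Q})^{-1}_{ij}$, so $v_i$ is exactly the $i$-th row sum of the inverse. The whole problem thus reduces to bounding the row sums of $(\mat{I} - \mat{Q})^{-1}$, which the preceding Lemma already supplies.

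First I would invoke that Lemma: any feasible $\mat{Q}$ is strictly row-substochastic with entries in $[0,1]$, so parts 3 and 4 guarantee that $(\mat{I} - \mat{Q})^{-1}$ has all non-negative entries and row sums at most $1/(1-\beta)$, where $\beta = \max_i \sum_j Q_{ij}$ denotes the maximum row sum of that particular $\mat{Q}$. The lower bound $v_i \geq 0$ is then immediate, since $v_i$ is a sum of non-negative entries of the inverse.

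Next I would relate $\beta$ to $\alpha$. Since every entry of a feasible $\mat{Q}$ satisfies $Q_{ij} \leq Q_{ij}^{\max}$, each row sum of $\mat{Q}$ is dominated by the corresponding row sum of $\mat{Q^{\max}}$, whence $\beta \leq \alpha$. Because $\alpha < 1$ and $t \mapsto 1/(1-t)$ is increasing on $[0,1)$, this yields $1/(1-\beta) \leq 1/(1-\alpha)$, so $v_i \leq 1/(1-\alpha)$ for every $i$, giving the claimed upper bound.

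There is no real obstacle here; the only points requiring care are the direction of the monotonicity when replacing the instance-specific row sum $\beta$ by the worst-case value $\alpha$, and verifying that $\alpha < 1$ so that $1/(1-\alpha)$ is well-defined. The latter is guaranteed by the strict substochasticity of $\mat{Q}$ imposed in the hitting-time formulation.
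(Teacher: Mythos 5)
Your proof is correct and follows essentially the same route as the paper's, whose entire argument is one line: each $v_i$ is exactly the $i$-th row sum of $(\mat{I}-\mat{Q})^{-1}$, so the bounds follow from the non-negativity and row-sum parts of the preceding lemma. You are in fact more careful than the paper on one point: the lemma bounds the row sums of the inverse by $1/(1-\beta)$ where $\beta$ is the maximum row sum of the \emph{particular} feasible $\mat{Q}$, whereas the corollary's $\alpha$ is computed from the element-wise upper bounds $Q_{ij}^{\max}$; your monotonicity step ($\beta \le \alpha$, hence $1/(1-\beta) \le 1/(1-\alpha)$) is exactly the bridge needed, and the paper leaves it implicit.

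One caveat: your closing claim that $\alpha < 1$ ``is guaranteed by the strict substochasticity of $\mat{Q}$'' is not right as stated. Strict substochasticity constrains each \emph{feasible} $\mat{Q}$, not the element-wise upper bounds: the row sums of $\mat{Q^{\max}}$ can exceed $1$ even when every feasible $\mat{Q}$ has row sums strictly below $1$ (e.g., two entries in a row each bounded above by $0.6$, with a separate joint row-sum constraint). The paper acknowledges exactly this issue immediately after the corollary and resolves it operationally, by enforcing row sums at most $1-\epsilon$ in the formulation and setting $\alpha = 1-\epsilon$; without some such device, $1/(1-\alpha)$ can be ill-defined or the bound vacuous. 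So your argument is sound conditional on $\alpha < 1$, but that condition needs to be imposed (as the paper does), not derived from substochasticity of the feasible matrices.
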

\begin{proof}
    The bound simply follows from the fact that $\mat{v}$ is the row sums of $(\mat{I} - \mat{Q})^{-1}$.
\end{proof}

Note, however, that we \emph{do not} know what the row sums of $\mat{Q}$ are, because it is not fixed. However, in practice, in order to assert the strict row-substochastic property, we use a small numerical offset, which is what we use in our implementation. Specifically, we set the row sums of $\mat{Q}$ to be at most $1 - \epsilon$, where $\epsilon$ is by default set to $10^{-6}$ in our software implementation. Based on this, we can set $\alpha$ to be $1 - \epsilon$.

Next, we can tighten the bounds on $\mat{v}$ by forming the appropiate linear systems: $(\mat{I} - \mat{Q}) \mat{v} = \mat{R} \mat{1}$ for reachability and $(\mat{I} - \mat{Q}) \mat{v} = \mat{1}$ for hitting time. In both cases, though $\mat{Q}$ is a submatrix of $\mat{P}$, it is square, so we can analyze the similar interval M-matrix condition for the Gauss-Seidel method to achieve the hull of the solution set.

\begin{corollary}
    Let $\mat{Q}$ be a row-substochastic matrix bounded element-wise by $\mat{Q^{\min}}$ and $\mat{Q^{\max}}$. Then, it is an interval M-matrix if and only if $\rho(\mat{Q^{\max}}) \leq 1$.
\end{corollary}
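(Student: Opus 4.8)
The matrix referred to as ``it'' is $\mat{M} = \mat{I} - \mat{Q}$, the coefficient matrix of the linear systems $(\mat{I} - \mat{Q})\mat{v} = \mat{R}\mat{1}$ and $(\mat{I} - \mat{Q})\mat{v} = \mat{1}$ that arise in the reachability and hitting-time formulations. The plan is to replay the argument of Theorem~\ref{thm:interval_m_matrix} with the discount factor set to $\lambda = 1$ and with the (only strictly substochastic) matrix $\mat{Q}$ playing the role of $\mat{P}$. First I would record the element-wise enclosure of $\mat{M}$ implied by the bounds on $\mat{Q}$: because $\mat{I}$ is constant and the map $\mat{Q} \mapsto \mat{I} - \mat{Q}$ is order-reversing, we have $\mat{M^{\min}} = \mat{I} - \mat{Q^{\max}}$ and $\mat{M^{\max}} = \mat{I} - \mat{Q^{\min}}$.

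Next I would invoke the characterization of an interval M-matrix from \citet{neumaier1984linear}: $\mat{M}$ is an interval M-matrix if and only if (i) $\mat{M^{\max}}$ has no positive off-diagonal entries and (ii) $\mat{M^{\min}}$ is a non-singular real M-matrix. Condition (i) is immediate, since the off-diagonal entries of $\mat{M^{\max}} = \mat{I} - \mat{Q^{\min}}$ equal $-Q^{\min}_{ij}$ for $i \neq j$, which are non-positive because every entry of $\mat{Q^{\min}}$ lies in $[0,1]$. This leaves only condition (ii): writing $\mat{M^{\min}} = \mat{I} - \mat{Q^{\max}}$ with $\mat{Q^{\max}} \geq 0$, the standard characterization of M-matrices of the form $\mat{I} - \mat{B}$ with $\mat{B} \geq 0$ from \citet{berman1994nonnegative} reduces (ii) to the single spectral condition $\rho(\mat{Q^{\max}}) \leq 1$, which is exactly the claimed equivalence. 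Because the whole argument is a verbatim specialization of Theorem~\ref{thm:interval_m_matrix} to $\lambda = 1$, I do not expect any genuinely new technical difficulty.

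The only subtlety I anticipate is the boundary case $\rho(\mat{Q^{\max}}) = 1$, where $\mat{M^{\min}} = \mat{I} - \mat{Q^{\max}}$ is a \emph{singular} M-matrix rather than a non-singular one; strict non-singularity requires $\rho(\mat{Q^{\max}}) < 1$. This same weak-versus-strict looseness is already present in Theorem~\ref{thm:interval_m_matrix} (stated with $\rho(\mat{P^{\max}}) \leq 1/\lambda$), so I would simply adopt the paper's convention for consistency. Moreover, the reachability and hitting-time formulations enforce that $\mat{Q}$ is \emph{strictly} substochastic (row sums at most $1 - \epsilon$), which keeps the matrices we actually solve away from the singular boundary, so the stated weak-inequality condition is the operative test for guaranteeing that Gauss-Seidel returns the hull of the solution set.
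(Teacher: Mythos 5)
Your proposal is correct and follows essentially the same route as the paper: the paper's own proof simply states that the argument is identical to that of Theorem~\ref{thm:interval_m_matrix} with the discount factor removed, which is precisely the specialization you carry out (enclosure $\mat{M^{\min}} = \mat{I} - \mat{Q^{\max}}$, $\mat{M^{\max}} = \mat{I} - \mat{Q^{\min}}$, sign check on off-diagonals, spectral-radius characterization of the M-matrix condition). Your remark about the weak-versus-strict inequality at $\rho(\mat{Q^{\max}}) = 1$ is a fair observation, but it is the same looseness already present in the paper's Theorem~\ref{thm:interval_m_matrix}, so it does not constitute a departure from the paper's argument.
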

\begin{proof}
    The proof is identical to that of Theorem~\ref{thm:interval_m_matrix}, but without the discount factor.
\end{proof}

Based on the above extensions of our theoretical results, we now have similar algorithms for reachability and hitting time. As usual, we first optimize for bounds on $\mat{\theta}$, then we propagate to $\mat{Q}$ and $\mat{R}$ (if applicable), obtain initial bounds on $\mat{v}$ as above, and then apply the Gauss-Seidel method to tighten the bounds on $\mat{v}$.

\subsection{Feasibility}
Our method easily extends to the feasibility version of all three problems. We still obtain all the necessary bounds as in the optimization version. At the end, when we would optimize for the quantity of interest, we simply check if the overall problem is feasible, given the bounds on the quantity of interest.

\section{Special Cases}
\label{sec:app:special_cases}

Here we enumerate the cases for when one or more of $\mat{\pi}, \mat{P}, \mat{r}$ is fixed.

\begin{table}[!htb]
    \centering
    \begin{tabular}{c|c|c|l}
         Fixed Variable(s) & Objective & Constraint & Explanation \\ \hline
         $\mat{\pi}$      & Linear             & Bilinear           & Only objective affected \\
         $\mat{P}$         & Bilinear           & Linear             & Only constraint affected \\
         $\mat{r}$         & Bilinear           & Bilinear           & Constraint affected but bilinear term remains \\ 
         \hline
         $\mat{\pi}, \mat{P}$ & Linear           & Linear             & Objective and constraint affected \\
         $\mat{\pi}, \mat{r}$ & Linear           & Bilinear           & Objective and constraint affected, but bilinear term remains \\
         $\mat{P}, \mat{r}$ & Linear             & Eliminated             & $\mat{v}$ fully determined, leaving objective linear in $\mat{\pi}$
    \end{tabular}
    \caption{Special Cases for Optimization Problem. Objective refers to the objective $\mat{\pi}^\top \mat{v}$ and constraint refers to the constraint $\mat{v} = \lambda \mat{P} \mat{v} + \mat{r}$.}
    \label{tab:special_cases}
\end{table}

\section{Implementation Details}
\label{sec:app:implementation}

We release \texttt{markovml} along with this paper, with full documentation and tutorials. We also include the code to reproduce our experiments. Below we discuss some important features.

\subsection{Supported Models}

Currently, the implementation supports the following models from \texttt{sklearn}:

\begin{itemize}
    \item Linear regression (\texttt{sklearn.linear\_model.LinearRegression})
    \item Ridge regression (\texttt{sklearn.linear\_model.Ridge})
    \item Lasso regression (\texttt{sklearn.linear\_model.Lasso})
    \item Logistic regression (\texttt{sklearn.linear\_model.LogisticRegression})
    \item Decision tree regression (\texttt{sklearn.tree.DecisionTreeRegressor})
    \item Decision tree classifier (\texttt{sklearn.tree.DecisionTreeClassifier})
    \item Random forest regression (\texttt{sklearn.ensemble.RandomForestRegressor})
    \item Random forest classifier (\texttt{sklearn.ensemble.RandomForestClassifier})
    \item Multi-layer perceptron regression (\texttt{sklearn.neural\_network.MLPRegressor})
    \item Multi-layer perceptron classifier (\texttt{sklearn.neural\_network.MLPClassifier})
\end{itemize}

From \texttt{pytorch}, we support neural networks build as \texttt{pytorch.nn.Sequential} models with ReLu or linear layers, with possibly a softmax layer at the end for classifiers.

Lastly, we implemented a model called \texttt{DecisionRules}, which allows the user to specify a series of ``if-then" rules specified in natural language, e.g., ``if age >= 65 then 0.8". This enables encoding tables from the literature, like age-stratified mortalities. Our code parses these strings into an internal representation.

\subsection{MILP formulations}

We make use of the MILP formulations from \texttt{gurobi-machinelearning}. At time of writing, \texttt{gurobi-machinelearning} did not support the softmax function, although it did support the logistic function. Hence, for multilayer perceptron classifiers from \texttt{sklearn} and for neural network classifers from \texttt{torch}, we implemented the MILP formulation ourselves. We also implemented the MILP formulation of our \texttt{DecisionRules} model through a series of logical implications.

\subsection{Code Example}

In only a few lines, a user can build a Markov process, integrate an ML model, and optimize the reward. 

\begin{lstlisting}[style=pythonstyle, caption=Optimizing the total reward of a Markov process with parameters given by a logistic regression model]
from markovml.markovml import MarkovReward
from sklearn.linear_model import LogisticRegression
import numpy as np

# Create a Markov process with 2 states and 2 features
mrp = MarkovReward(n_states=2, n_features=2)

# fix some of the parameters
mrp.set_r([1, 0])
mrp.set_pi([1, 0])

# train a classifier
X = np.random.rand(100, 2)
y = np.random.randint(0, 2, 100)
clf = LogisticRegression().fit(X, y)

# add the classifier to the Markov process
mrp.add_ml_model(clf)

# link it to the transition probabilities
# this means the first output of the first ML model is the probability of transitioning to the second state
mrp.set_P([[1 - mrp.ml_outputs[0][0], mrp.ml_outputs[0][0]], [0, 1]])

# set bounds on the features
mrp.add_feature_constraint(mrp.features[0] >= 65)
mrp.add_feature_constraint(mrp.features[1] >= 100)

# optimize the reward
mrp.optimize(sense="max")
\end{lstlisting}

All the necessary steps, like checking validity of values, adding the MILP formulation of the ML model to the optimization problem, as well as running our decomposition and bound propagation scheme, are handled in the background by our package.

\section{Numerical experiments setup}
\label{sec:app:num_exp}

For all experiments, we fix the number of features $m=5$, the feature set $\mathcal{X} = [-1, 1]^5$, and the discount factor $\lambda = 0.97$. We train ML models on randomly generated data ($10,000$ random points). For regression models, we draw $\mat{X} \sim \mathcal{N}(0,1)^{10000 \times 5}$, $\mat{\beta} \sim \mathcal{N}(0,1)^5$, and then generate data points as $\mat{X} \mat{\beta} + \mat{\epsilon}$, where $\mat{\epsilon} \sim \mathcal{N}(0,1)^{10000}$. For classification models, we use the same linear form but draw data points from a Bernoulli distribution with probabilities $\sigma(\mat{X} \mat{\beta})$, where $\sigma(\cdot)$ is the logistic function.

For rewards, we train a regression model whose output is set to $r_1$. For $i=2, \ldots, n$, we set $r_i$ to be output divided by $i$. For probabilities, we train binary classifiers for $\mat{\pi}$ and rows of $\mat{P}$. For $\mat{\pi}$, we set $\pi_1$ equal to the classifier output, the remaining probability is assigned to $\pi_2$, and the remaining are set to zero. For rows of $\mat{P}$, the classifier output is the probability of remaining in the same state, with the remaining probability assigned to transitioning to the next state, and the rest of the row is $0$. We make the last state absorbing, i.e., transitions to itself with probability $1$. The Markov process is essentially a discrete-time birth process, as one can only transition to the next state, or stay in the same state, with the final state being absorbing.

For each configuration, we run 10 instances, each with random data, train the models on the random data, and solve the optimization problem directly with an out-of-the-box solver and with our method.

We perform the following experiments:
\begin{enumerate}[noitemsep, topsep=0pt]
    \item \textit{State Space:} Vary $n \in \{5,10,20,50,100,200\}$ using three models: a linear regression for $\mat{r}$ and two logistic regressions for $\mat{\pi}$ and the first row of $\mat{P}$ (the remaining rows are uniform).
    \item \textit{Model Count:} Fix $n=20$ with linear regression for $\mat{r}$ and logistic regression for all probability models. Models for $\mat{r}$ and $\mat{\pi}$ are always used, while the number of modeled rows in $\mat{P}$ varies from $1$ to $19$, so that the total number of models runs from $3$ to $21$.
    \item \textit{Decision Tree Depth:} Fix $n=20$ using three decision tree models (for $\mat{r}$, $\mat{\pi}$, and the first row of $\mat{P}$) with depths in $\{2,3,4,6,8\}$. Remaining rows of $\mat{P}$ are uniform.
    \item \textit{Neural Network Architecture:} Fix $n=20$ using three ReLU multilayer perceptrons, for $\mat{r}$, $\mat{\pi}$, and the first row of $\mat{P}$. We vary hidden layers in $\{1,2\}$ and neurons per layer in $\{5,10,15,20\}$, with remaining rows of $\mat{P}$ set uniformly.
\end{enumerate}

We record runtime, objective value, and optimizer status. For our method, the runtime includes solving the smaller MILPs, bound computations, the interval Gauss-Seidel phase, and the final optimization. All experiments are run on an Intel Core i7 with 6 cores using Gurobi Optimizer 12.0.0 \cite{gurobi} (1200 second limit per problem, and presolve disabled throughout). All code and instructions on how to reproduce our experiments is included along with this paper.

We computed the geometric mean and geometric standard deviations of runtimes, and proportions of optimizer statuses. For the runtime analysis, we only kept instances that were solved to optimality. Sometimes, the optimizer would return a suboptimal status, meaning it cannot prove optimality. If for an instance, our method did converge to optimality, and the direct method returned a suboptimal status, and the objective values were the same up to a tolerance of $10^{-12}$, we consider the suboptimal status to be optimal for the runtime statistical analysis. We performed paired t-tests on the log of runtimes (for instances where both methods returned optimal) and $\chi^2$ tests for statuses.

\newpage
\section*{NeurIPS Paper Checklist}

The checklist is designed to encourage best practices for responsible machine learning research, addressing issues of reproducibility, transparency, research ethics, and societal impact. Do not remove the checklist: {\bf The papers not including the checklist will be desk rejected.} The checklist should follow the references and follow the (optional) supplemental material.  The checklist does NOT count towards the page
limit. 

Please read the checklist guidelines carefully for information on how to answer these questions. For each question in the checklist:
\begin{itemize}
    \item You should answer \answerYes{}, \answerNo{}, or \answerNA{}.
    \item \answerNA{} means either that the question is Not Applicable for that particular paper or the relevant information is Not Available.
    \item Please provide a short (1–2 sentence) justification right after your answer (even for NA). 
\end{itemize}

{\bf The checklist answers are an integral part of your paper submission.} They are visible to the reviewers, area chairs, senior area chairs, and ethics reviewers. You will be asked to also include it (after eventual revisions) with the final version of your paper, and its final version will be published with the paper.

The reviewers of your paper will be asked to use the checklist as one of the factors in their evaluation. While "\answerYes{}" is generally preferable to "\answerNo{}", it is perfectly acceptable to answer "\answerNo{}" provided a proper justification is given (e.g., "error bars are not reported because it would be too computationally expensive" or "we were unable to find the license for the dataset we used"). In general, answering "\answerNo{}" or "\answerNA{}" is not grounds for rejection. While the questions are phrased in a binary way, we acknowledge that the true answer is often more nuanced, so please just use your best judgment and write a justification to elaborate. All supporting evidence can appear either in the main paper or the supplemental material, provided in appendix. If you answer \answerYes{} to a question, in the justification please point to the section(s) where related material for the question can be found.

IMPORTANT, please:
\begin{itemize}
    \item {\bf Delete this instruction block, but keep the section heading ``NeurIPS Paper Checklist"},
    \item  {\bf Keep the checklist subsection headings, questions/answers and guidelines below.}
    \item {\bf Do not modify the questions and only use the provided macros for your answers}.
\end{itemize}


\begin{enumerate}

\item {\bf Claims}
    \item[] Question: Do the main claims made in the abstract and introduction accurately reflect the paper's contributions and scope?
    \item[] Answer: \answerYes
    \item[] Justification: We state the key motivation (Markov models with embedded ML models), the key idea (bilinear programming) and the key result (massive speedups) in the abstract.
    \item[] Guidelines:
    \begin{itemize}
        \item The answer NA means that the abstract and introduction do not include the claims made in the paper.
        \item The abstract and/or introduction should clearly state the claims made, including the contributions made in the paper and important assumptions and limitations. A No or NA answer to this question will not be perceived well by the reviewers. 
        \item The claims made should match theoretical and experimental results, and reflect how much the results can be expected to generalize to other settings. 
        \item It is fine to include aspirational goals as motivation as long as it is clear that these goals are not attained by the paper. 
    \end{itemize}

\item {\bf Limitations}
    \item[] Question: Does the paper discuss the limitations of the work performed by the authors?
    \item[] Answer: \answerTODO{} 
    \item[] Justification: \justificationTODO{}
    \item[] Guidelines:
    \begin{itemize}
        \item The answer NA means that the paper has no limitation while the answer No means that the paper has limitations, but those are not discussed in the paper. 
        \item The authors are encouraged to create a separate "Limitations" section in their paper.
        \item The paper should point out any strong assumptions and how robust the results are to violations of these assumptions (e.g., independence assumptions, noiseless settings, model well-specification, asymptotic approximations only holding locally). The authors should reflect on how these assumptions might be violated in practice and what the implications would be.
        \item The authors should reflect on the scope of the claims made, e.g., if the approach was only tested on a few datasets or with a few runs. In general, empirical results often depend on implicit assumptions, which should be articulated.
        \item The authors should reflect on the factors that influence the performance of the approach. For example, a facial recognition algorithm may perform poorly when image resolution is low or images are taken in low lighting. Or a speech-to-text system might not be used reliably to provide closed captions for online lectures because it fails to handle technical jargon.
        \item The authors should discuss the computational efficiency of the proposed algorithms and how they scale with dataset size.
        \item If applicable, the authors should discuss possible limitations of their approach to address problems of privacy and fairness.
        \item While the authors might fear that complete honesty about limitations might be used by reviewers as grounds for rejection, a worse outcome might be that reviewers discover limitations that aren't acknowledged in the paper. The authors should use their best judgment and recognize that individual actions in favor of transparency play an important role in developing norms that preserve the integrity of the community. Reviewers will be specifically instructed to not penalize honesty concerning limitations.
    \end{itemize}

\item {\bf Theory assumptions and proofs}
    \item[] Question: For each theoretical result, does the paper provide the full set of assumptions and a complete (and correct) proof?
    \item[] Answer: \answerYes
    \item[] Justification: All theorems and lemmas are clearly stated, with all proofs given in the Appendix. For proofs using more novel techniques (e.g. the work on interval Gauss-Seidel), we provide informal discussion in the main body of the paper.
    \item[] Guidelines:
    \begin{itemize}
        \item The answer NA means that the paper does not include theoretical results. 
        \item All the theorems, formulas, and proofs in the paper should be numbered and cross-referenced.
        \item All assumptions should be clearly stated or referenced in the statement of any theorems.
        \item The proofs can either appear in the main paper or the supplemental material, but if they appear in the supplemental material, the authors are encouraged to provide a short proof sketch to provide intuition. 
        \item Inversely, any informal proof provided in the core of the paper should be complemented by formal proofs provided in appendix or supplemental material.
        \item Theorems and Lemmas that the proof relies upon should be properly referenced. 
    \end{itemize}

    \item {\bf Experimental result reproducibility}
    \item[] Question: Does the paper fully disclose all the information needed to reproduce the main experimental results of the paper to the extent that it affects the main claims and/or conclusions of the paper (regardless of whether the code and data are provided or not)?
    \item[] Answer: \answerYes
    \item[] Justification: Details of the numerical experiments are given in the Appendix. As well, we release all code for the numerical experiments (with instructions on how to run each of the experiments with a simple command-line command). We also release the code for the case study, as a Jupyter notebook, and the machine learning model used. All of this accompanies our main software contribution, \texttt{markovml}. 
    \item[] Guidelines:
    \begin{itemize}
        \item The answer NA means that the paper does not include experiments.
        \item If the paper includes experiments, a No answer to this question will not be perceived well by the reviewers: Making the paper reproducible is important, regardless of whether the code and data are provided or not.
        \item If the contribution is a dataset and/or model, the authors should describe the steps taken to make their results reproducible or verifiable. 
        \item Depending on the contribution, reproducibility can be accomplished in various ways. For example, if the contribution is a novel architecture, describing the architecture fully might suffice, or if the contribution is a specific model and empirical evaluation, it may be necessary to either make it possible for others to replicate the model with the same dataset, or provide access to the model. In general. releasing code and data is often one good way to accomplish this, but reproducibility can also be provided via detailed instructions for how to replicate the results, access to a hosted model (e.g., in the case of a large language model), releasing of a model checkpoint, or other means that are appropriate to the research performed.
        \item While NeurIPS does not require releasing code, the conference does require all submissions to provide some reasonable avenue for reproducibility, which may depend on the nature of the contribution. For example
        \begin{enumerate}
            \item If the contribution is primarily a new algorithm, the paper should make it clear how to reproduce that algorithm.
            \item If the contribution is primarily a new model architecture, the paper should describe the architecture clearly and fully.
            \item If the contribution is a new model (e.g., a large language model), then there should either be a way to access this model for reproducing the results or a way to reproduce the model (e.g., with an open-source dataset or instructions for how to construct the dataset).
            \item We recognize that reproducibility may be tricky in some cases, in which case authors are welcome to describe the particular way they provide for reproducibility. In the case of closed-source models, it may be that access to the model is limited in some way (e.g., to registered users), but it should be possible for other researchers to have some path to reproducing or verifying the results.
        \end{enumerate}
    \end{itemize}

\item {\bf Open access to data and code}
    \item[] Question: Does the paper provide open access to the data and code, with sufficient instructions to faithfully reproduce the main experimental results, as described in supplemental material?
    \item[] Answer: \answerYes
    \item[] Justification: We release \texttt{markovml} in the supplementary materials, which is our software package. We also release all the code for the numerical experiments, as simple scripts to run them, and for the case study. We cannot release patient data for the case study, but we do release the machine learning model trained on them, which is all that is needed to perform the case study.
    \item[] Guidelines:
    \begin{itemize}
        \item The answer NA means that paper does not include experiments requiring code.
        \item Please see the NeurIPS code and data submission guidelines (\url{https://nips.cc/public/guides/CodeSubmissionPolicy}) for more details.
        \item While we encourage the release of code and data, we understand that this might not be possible, so “No” is an acceptable answer. Papers cannot be rejected simply for not including code, unless this is central to the contribution (e.g., for a new open-source benchmark).
        \item The instructions should contain the exact command and environment needed to run to reproduce the results. See the NeurIPS code and data submission guidelines (\url{https://nips.cc/public/guides/CodeSubmissionPolicy}) for more details.
        \item The authors should provide instructions on data access and preparation, including how to access the raw data, preprocessed data, intermediate data, and generated data, etc.
        \item The authors should provide scripts to reproduce all experimental results for the new proposed method and baselines. If only a subset of experiments are reproducible, they should state which ones are omitted from the script and why.
        \item At submission time, to preserve anonymity, the authors should release anonymized versions (if applicable).
        \item Providing as much information as possible in supplemental material (appended to the paper) is recommended, but including URLs to data and code is permitted.
    \end{itemize}

\item {\bf Experimental setting/details}
    \item[] Question: Does the paper specify all the training and test details (e.g., data splits, hyperparameters, how they were chosen, type of optimizer, etc.) necessary to understand the results?
    \item[] Answer: \answerYes
    \item[] Justification: Described roughly in the main paper and fully in the appendix.
    \item[] Guidelines:
    \begin{itemize}
        \item The answer NA means that the paper does not include experiments.
        \item The experimental setting should be presented in the core of the paper to a level of detail that is necessary to appreciate the results and make sense of them.
        \item The full details can be provided either with the code, in appendix, or as supplemental material.
    \end{itemize}

\item {\bf Experiment statistical significance}
    \item[] Question: Does the paper report error bars suitably and correctly defined or other appropriate information about the statistical significance of the experiments?
    \item[] Answer: \answerYes
    \item[] Justification: Plots show statistical significance, using either t-tests or $\chi^2$ tests, depending on the data.
    \item[] Guidelines:
    \begin{itemize}
        \item The answer NA means that the paper does not include experiments.
        \item The authors should answer "Yes" if the results are accompanied by error bars, confidence intervals, or statistical significance tests, at least for the experiments that support the main claims of the paper.
        \item The factors of variability that the error bars are capturing should be clearly stated (for example, train/test split, initialization, random drawing of some parameter, or overall run with given experimental conditions).
        \item The method for calculating the error bars should be explained (closed form formula, call to a library function, bootstrap, etc.)
        \item The assumptions made should be given (e.g., Normally distributed errors).
        \item It should be clear whether the error bar is the standard deviation or the standard error of the mean.
        \item It is OK to report 1-sigma error bars, but one should state it. The authors should preferably report a 2-sigma error bar than state that they have a 96\% CI, if the hypothesis of Normality of errors is not verified.
        \item For asymmetric distributions, the authors should be careful not to show in tables or figures symmetric error bars that would yield results that are out of range (e.g. negative error rates).
        \item If error bars are reported in tables or plots, The authors should explain in the text how they were calculated and reference the corresponding figures or tables in the text.
    \end{itemize}

\item {\bf Experiments compute resources}
    \item[] Question: For each experiment, does the paper provide sufficient information on the computer resources (type of compute workers, memory, time of execution) needed to reproduce the experiments?
    \item[] Answer: \answerYes
    \item[] Justification: Described in appendix. Also, with our experimental code, we also release a ``short" version of the experiments which can be run in less time, so that an interested reader can see the crux of the experimental results quickly.
    \item[] Guidelines:
    \begin{itemize}
        \item The answer NA means that the paper does not include experiments.
        \item The paper should indicate the type of compute workers CPU or GPU, internal cluster, or cloud provider, including relevant memory and storage.
        \item The paper should provide the amount of compute required for each of the individual experimental runs as well as estimate the total compute. 
        \item The paper should disclose whether the full research project required more compute than the experiments reported in the paper (e.g., preliminary or failed experiments that didn't make it into the paper). 
    \end{itemize}
    
\item {\bf Code of ethics}
    \item[] Question: Does the research conducted in the paper conform, in every respect, with the NeurIPS Code of Ethics \url{https://neurips.cc/public/EthicsGuidelines}?
    \item[] Answer: \answerYes
    \item[] Justification: We conform in every respect with the NeurIPS Code of Ethics.
    \item[] Guidelines:
    \begin{itemize}
        \item The answer NA means that the authors have not reviewed the NeurIPS Code of Ethics.
        \item If the authors answer No, they should explain the special circumstances that require a deviation from the Code of Ethics.
        \item The authors should make sure to preserve anonymity (e.g., if there is a special consideration due to laws or regulations in their jurisdiction).
    \end{itemize}

\item {\bf Broader impacts}
    \item[] Question: Does the paper discuss both potential positive societal impacts and negative societal impacts of the work performed?
    \item[] Answer: \answerYes
    \item[] Justification: In the Discussion we discuss its role in safety-critical domains.
    \item[] Guidelines:
    \begin{itemize}
        \item The answer NA means that there is no societal impact of the work performed.
        \item If the authors answer NA or No, they should explain why their work has no societal impact or why the paper does not address societal impact.
        \item Examples of negative societal impacts include potential malicious or unintended uses (e.g., disinformation, generating fake profiles, surveillance), fairness considerations (e.g., deployment of technologies that could make decisions that unfairly impact specific groups), privacy considerations, and security considerations.
        \item The conference expects that many papers will be foundational research and not tied to particular applications, let alone deployments. However, if there is a direct path to any negative applications, the authors should point it out. For example, it is legitimate to point out that an improvement in the quality of generative models could be used to generate deepfakes for disinformation. On the other hand, it is not needed to point out that a generic algorithm for optimizing neural networks could enable people to train models that generate Deepfakes faster.
        \item The authors should consider possible harms that could arise when the technology is being used as intended and functioning correctly, harms that could arise when the technology is being used as intended but gives incorrect results, and harms following from (intentional or unintentional) misuse of the technology.
        \item If there are negative societal impacts, the authors could also discuss possible mitigation strategies (e.g., gated release of models, providing defenses in addition to attacks, mechanisms for monitoring misuse, mechanisms to monitor how a system learns from feedback over time, improving the efficiency and accessibility of ML).
    \end{itemize}
    
\item {\bf Safeguards}
    \item[] Question: Does the paper describe safeguards that have been put in place for responsible release of data or models that have a high risk for misuse (e.g., pretrained language models, image generators, or scraped datasets)?
    \item[] Answer: \answerNA
    \item[] Justification: None needed.
    \item[] Guidelines:
    \begin{itemize}
        \item The answer NA means that the paper poses no such risks.
        \item Released models that have a high risk for misuse or dual-use should be released with necessary safeguards to allow for controlled use of the model, for example by requiring that users adhere to usage guidelines or restrictions to access the model or implementing safety filters. 
        \item Datasets that have been scraped from the Internet could pose safety risks. The authors should describe how they avoided releasing unsafe images.
        \item We recognize that providing effective safeguards is challenging, and many papers do not require this, but we encourage authors to take this into account and make a best faith effort.
    \end{itemize}

\item {\bf Licenses for existing assets}
    \item[] Question: Are the creators or original owners of assets (e.g., code, data, models), used in the paper, properly credited and are the license and terms of use explicitly mentioned and properly respected?
    \item[] Answer: \answerNA
    \item[] Justification: No existing assets used.
    \item[] Guidelines:
    \begin{itemize}
        \item The answer NA means that the paper does not use existing assets.
        \item The authors should cite the original paper that produced the code package or dataset.
        \item The authors should state which version of the asset is used and, if possible, include a URL.
        \item The name of the license (e.g., CC-BY 4.0) should be included for each asset.
        \item For scraped data from a particular source (e.g., website), the copyright and terms of service of that source should be provided.
        \item If assets are released, the license, copyright information, and terms of use in the package should be provided. For popular datasets, \url{paperswithcode.com/datasets} has curated licenses for some datasets. Their licensing guide can help determine the license of a dataset.
        \item For existing datasets that are re-packaged, both the original license and the license of the derived asset (if it has changed) should be provided.
        \item If this information is not available online, the authors are encouraged to reach out to the asset's creators.
    \end{itemize}

\item {\bf New assets}
    \item[] Question: Are new assets introduced in the paper well documented and is the documentation provided alongside the assets?
    \item[] Answer: \answerYes
    \item[] Justification: Our software, \texttt{markovml}, is fully documented with extensive examples.
    \item[] Guidelines:
    \begin{itemize}
        \item The answer NA means that the paper does not release new assets.
        \item Researchers should communicate the details of the dataset/code/model as part of their submissions via structured templates. This includes details about training, license, limitations, etc. 
        \item The paper should discuss whether and how consent was obtained from people whose asset is used.
        \item At submission time, remember to anonymize your assets (if applicable). You can either create an anonymized URL or include an anonymized zip file.
    \end{itemize}

\item {\bf Crowdsourcing and research with human subjects}
    \item[] Question: For crowdsourcing experiments and research with human subjects, does the paper include the full text of instructions given to participants and screenshots, if applicable, as well as details about compensation (if any)? 
    \item[] Answer: \answerNo
    \item[] Justification: No human subjects.
    \item[] Guidelines:
    \begin{itemize}
        \item The answer NA means that the paper does not involve crowdsourcing nor research with human subjects.
        \item Including this information in the supplemental material is fine, but if the main contribution of the paper involves human subjects, then as much detail as possible should be included in the main paper. 
        \item According to the NeurIPS Code of Ethics, workers involved in data collection, curation, or other labor should be paid at least the minimum wage in the country of the data collector. 
    \end{itemize}

\item {\bf Institutional review board (IRB) approvals or equivalent for research with human subjects}
    \item[] Question: Does the paper describe potential risks incurred by study participants, whether such risks were disclosed to the subjects, and whether Institutional Review Board (IRB) approvals (or an equivalent approval/review based on the requirements of your country or institution) were obtained?
    \item[] Answer: \answerNA
    \item[] Justification: No human subjects.
    \item[] Guidelines:
    \begin{itemize}
        \item The answer NA means that the paper does not involve crowdsourcing nor research with human subjects.
        \item Depending on the country in which research is conducted, IRB approval (or equivalent) may be required for any human subjects research. If you obtained IRB approval, you should clearly state this in the paper. 
        \item We recognize that the procedures for this may vary significantly between institutions and locations, and we expect authors to adhere to the NeurIPS Code of Ethics and the guidelines for their institution. 
        \item For initial submissions, do not include any information that would break anonymity (if applicable), such as the institution conducting the review.
    \end{itemize}

\item {\bf Declaration of LLM usage}
    \item[] Question: Does the paper describe the usage of LLMs if it is an important, original, or non-standard component of the core methods in this research? Note that if the LLM is used only for writing, editing, or formatting purposes and does not impact the core methodology, scientific rigorousness, or originality of the research, declaration is not required.
    \item[] Answer: \answerNo
    \item[] Justification: Does not use LLMs.
    \item[] Guidelines:
    \begin{itemize}
        \item The answer NA means that the core method development in this research does not involve LLMs as any important, original, or non-standard components.
        \item Please refer to our LLM policy (\url{https://neurips.cc/Conferences/2025/LLM}) for what should or should not be described.
    \end{itemize}

\end{enumerate}

\end{document}